\documentclass{article} 
\usepackage{iclr2027_conference,times}
\iclrfinalcopy

\usepackage{hyperref}
\usepackage{url}
\usepackage{amsmath,amssymb,amsthm,mathtools}
\usepackage{booktabs}
\usepackage{algorithm}
\usepackage{algorithmic}
\usepackage{xcolor}
\usepackage{microtype}
\usepackage{array}
\usepackage{subcaption}
\usepackage{graphicx}
\usepackage{tikz}
\usepackage{enumitem}
\usepackage{wrapfig}
\usetikzlibrary{arrows.meta,positioning,calc}

\theoremstyle{plain}
\newtheorem{theorem}{Theorem}[section]
\newtheorem{lemma}[theorem]{Lemma}

\theoremstyle{definition}
\newtheorem{definition}[theorem]{Definition}

\theoremstyle{remark}
\newtheorem{remark}[theorem]{Remark}

\newcommand{\R}{\mathbb{R}}
\newcommand{\Sn}[1]{S_{#1}}
\newcommand{\E}{\mathbf{E}}      
\newcommand{\Hs}{\mathbf{H}}     
\newcommand{\G}{\mathbf{G}}      
\newcommand{\dE}{d_{E}}
\newcommand{\dH}{d_{H}}
\newcommand{\Ifs}{\mathcal{I}}   
\newcommand{\Ops}{\mathcal{O}}   
\newcommand{\Gam}{\Gamma}

\newcommand{\norm}{\operatorname{norm}}
\DeclareMathOperator{\cat}{cat}

\DeclareMathOperator{\softmax}{softmax}
\DeclareMathOperator{\gelu}{GELU}
\DeclareMathOperator{\mean}{mean}

\title{Interfaces and Modules: A Scalable Representation for Weight-Space Learning across Architectures}

\author{Micha\l\ Kozyra \\
University of Oxford \\
\texttt{michal.kozyra@seh.ox.ac.uk}
\And
Michael Bronstein \\
University of Oxford\\
AITHYRA
}

\begin{document}

\maketitle
\lhead{}
\renewcommand{\headrulewidth}{0pt}

\begin{abstract}
Weight-space learning across architectures requires representations
that respect parameter symmetries and scale to large networks.
Existing equivariant tensor encoders require architecture-specific
symmetry specifications, while neuron graphs make global attention
costly. We introduce a typed graph organized around shared
coordinate-permutation actions. Its nodes, called \emph{interfaces},
represent coordinate spaces whose bound tensor axes must be
permuted jointly; operation occurrences, called \emph{modules},
form typed hyperedges. A compiler derives this graph from supported PyTorch models,
with exact recovery of the permutation factors defined by
its constraints.
The Universal Weight Encoder (UWE) combines local equivariant
updates to parameter features with global attention over invariant
interface summaries. The small number of interfaces makes global
attention practical, and a single encoder processes different
supported architectures. UWE is competitive with
architecture-specific state-of-the-art methods on established
benchmarks. On a new zoo of 1,200 transformers spanning
100 architectures, it outperforms the evaluated baselines at
ranking networks within unseen architectures, including
architectures larger than those used for training, while
using substantially less memory.
\end{abstract}

\section{Introduction}
\label{sec:intro}

Weight-space learning treats trained neural networks as data:
predicting performance from parameters, editing implicit neural
representations, and generating weights and adapters
\citep{unterthiner2020,navon2023,zhou2023nfn,
mahabadi2021hyperformer,ruiz2024hyperdreambooth}.
Public model hubs motivate extending these methods across networks
that differ in width, depth, operations and training history
\citep{falk2025}.
Their representations must also account for parameter symmetries:
hidden-unit permutations, and in suitable architectures continuous
rescalings and changes of basis, can preserve the represented
function
\citep{hechtnielsen1990,chen1993,neyshabur2015pathsgd,
putterman2024lora}.
Here we focus on coordinate permutations.
We seek a representation that respects these symmetries, supports
modern architectural components, transfers across architectures,
and remains compact enough for global attention.
It should also be derived from the network's implementation.

Existing representations meet only some of these requirements.
Equivariant weight-space networks act directly on weight tensors,
using an architecture-specific construction or a supplied
permutation specification
\citep{navon2023,zhou2023nfn,zhou2024unf,tran2025tnfn}.
Neural graph methods instead represent neurons as nodes and
weights as edge features, allowing one encoder to process different
architectures \citep{kofinas2024,lim2024}.
However, extending these representations to combinations of
residual connections, normalization, attention and low-rank adapters
requires careful treatment of their parameter dependencies.
The required symmetry specifications or graph encodings are
typically supplied by hand, and their correctness is not automatic:
the attention representation of \citet{kofinas2024} admits
relabellings that change the function and excludes valid ones
(Section~\ref{sec:attention-example}).

The size of neural graphs creates a further constraint.
Local message passing requires successive hops to connect distant
operations, while fully connected attention is quadratic in the
number of neuron nodes.
Methods that tokenize windows of weights offer another route to
scale, but do not enforce permutation equivariance
\citep{schurholt2024,falk2025}.
We therefore need objects that preserve the shared permutation
actions of the network while being substantially fewer than its
neurons.
Appendix~\ref{app:related} reviews related work in more detail.

These requirements suggest separating operations from the
coordinate spaces they act on.
In an MLP, permuting a hidden layer requires the same relabelling
of the preceding weight matrix's outputs and the following
matrix's inputs.
A residual addition can impose this agreement across several
activation tensors.
The shared coordinate space is therefore a natural object for
the representation: it identifies which parameter axes must
be relabelled together.
We call such a space an \emph{interface}.
Each permutation-bearing interface carries one shared coordinate
permutation, while distinct interfaces may be permuted independently.
We represent the network as a typed graph whose nodes are interfaces
and whose hyperedges are operation occurrences, called
\emph{modules}.

Our main contributions are this \emph{interface--module graph}
and the \emph{Universal Weight Encoder} (UWE), an equivariant
encoder that operates on it across architectures.
Shared permutation bindings already appear in alignment
specifications \citep{ainsworth2023} and in specifications used
to construct equivariant layers \citep{zhou2024unf}.
We use these bindings to define the graph's nodes and the
objects over which the encoder communicates.
UWE alternates local equivariant updates to parameter features
with global attention over invariant interface summaries
(Section~\ref{sec:encoder}).
The graphs are compact: a ResNet-18 has twelve
permutation-bearing interfaces, and an 85M-parameter transformer
has 301, compared with about 84,000 neuron nodes.
Global attention over these interfaces is therefore affordable.

To automate graph construction, a compiler traces a supported
PyTorch model and infers interfaces and parameter-axis bindings
from operator constraints.
Recovery is exact within its constraint language, and the
recovered relabellings preserve the network's function
(Section~\ref{sec:compiler}).
Models assembled from supported operations, layouts and sharing
patterns require no new architecture-specific specification.
Figure~\ref{fig:uwe-pipeline} illustrates the pipeline.

\begin{figure}[t]
\centering
\includegraphics[width=\linewidth]{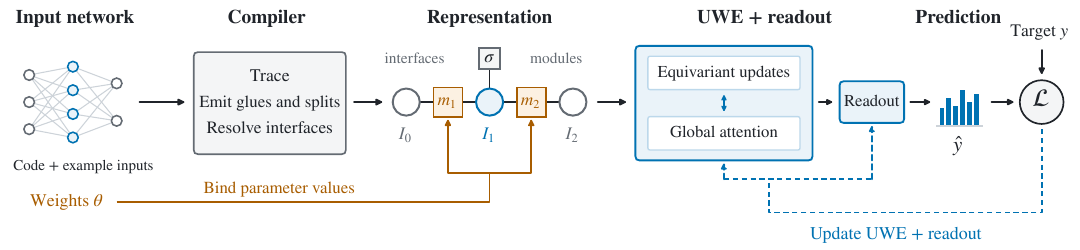}
\caption{The compiler derives interfaces and parameter bindings
from network code. UWE processes the resulting graph, and a
supervised loss trains the encoder and readout.}
\label{fig:uwe-pipeline}
\end{figure}

Evaluating transfer across architectures requires benchmarks
that vary both architecture and scale.
Common weight-space benchmarks focus on small networks from
a single architecture or a narrow family, leaving transfer to
larger, unseen architectures poorly tested.
We therefore introduce a transformer zoo with architecture-level
splits and loss profiles across class groups and image corruptions.
These profiles distinguish networks with similar aggregate
performance but different patterns of error.
UWE is competitive on standard weight-space benchmarks and
outperforms the tested baselines on the new zoo, with stronger
transfer to larger held-out architectures and lower memory use
(Section~\ref{sec:results}).
Separate scaling experiments establish computational feasibility
on larger networks, including full training steps on ViT-L/16
and GPT-2 Medium on a single 80\,GB GPU
(Section~\ref{sec:scaling-main}).

\section{The representation: a typed graph of interfaces and modules}
\label{sec:representation}

\subsection{Objects}
\label{sec:objects}

The representation is a typed bipartite graph with three kinds of
object (Figure~\ref{fig:overview}).

An \emph{interface} $I\in\Ifs$ is a coordinate space of width $n_I$.
It is \emph{permutation-bearing} if the compiler's constraints permit
arbitrary coordinate permutations, and \emph{fixed} if they permit
only the identity. Fixed interfaces arise, for example, from
input/output layers or fixed positional encodings. Interfaces carry
no arbitrary identity or depth labels.

A \emph{module} $o\in\Ops$ is an operation occurrence with a type,
hyperparameter descriptor and network depth. Several modules may
reference shared parameters. Parameter-free operations, such as
activations, additions and concatenations, are \emph{weightless}
modules retained for their constraints on relabellings.

A \emph{binding} associates a module axis with an interface under
a \emph{role}, such as input, output, query-output or scale-and-shift,
from a fixed vocabulary of about a dozen entries
(Appendix~\ref{app:vocab}). Each bound axis has one binding;
several axes may bind to the same interface. The representation
is thus a typed hypergraph: modules are typed hyperedges, and
bindings are their role-labelled incidences.

\begin{figure}[t]
\centering
\definecolor{ifA}{HTML}{0072B2} 
\definecolor{ifB}{HTML}{009E73} 
\definecolor{ifC}{HTML}{CC79A7} 
\definecolor{ifD}{HTML}{E69F00} 
\definecolor{ifB2}{HTML}{66C2A5} 
\definecolor{ifC2}{HTML}{E7A8C8} 
\definecolor{modgrey}{HTML}{4D4D4D}
\tikzset{
  iface/.style={circle, draw=#1, fill=#1!12, thick, minimum size=7.5mm, font=\small, inner sep=1pt},
  iface/.default=ifA,
  mod/.style={font=\scriptsize, inner sep=2pt, fill=gray!12, draw=modgrey, rounded corners=1pt},
  wl/.style={font=\scriptsize, inner sep=2pt, fill=white, draw=modgrey!70, dashed, rounded corners=1pt},
  rep/.style={>={Stealth[length=1.5mm]}, thick},
  op/.style={draw=modgrey, fill=gray!12, rounded corners=1pt, minimum height=5mm, minimum width=7mm, inner sep=2pt, font=\small},
  wlop/.style={draw=modgrey!70, dashed, fill=white, rounded corners=1pt, minimum height=5mm, minimum width=5.5mm, inner sep=2pt, font=\small},
  sum/.style={circle, draw=modgrey!70, fill=white, inner sep=0pt, minimum size=3.6mm, font=\small},
  ten/.style={font=\scriptsize, inner sep=1pt},
  w/.style={->, line width=0.9pt, draw=#1, >={Stealth[length=1.5mm]}},
  w/.default=ifA,
  lab/.style={font=\scriptsize, inner sep=1pt, text=#1}}
\begin{tabular}{@{}c@{\hspace{4mm}}c@{}}
\textbf{\small Conventional diagram} & \textbf{\small Interface--module graph} \\[3pt]
\begin{tikzpicture}[baseline=(current bounding box.center), scale=0.92, every node/.style={transform shape}]
\node[ten] (x) at (0,0) {$x$};
\node[ten] (x) at (0,0) {$x$};
\node[op]  (w1) at (0.9,0) {$W_1$};
\node[wlop] (s1) at (1.95,0) {$\sigma$};
\node[op]  (w2) at (3.05,0) {$W_2$};
\node[wlop] (s2) at (4.1,0) {$\sigma$};
\node[op]  (w3) at (5.2,0) {$W_3$};
\node[ten] (y) at (6.0,0) {$y$};
\draw[w=ifA] (x) -- (w1);
\draw[w=ifB] (w1) -- node[lab=ifB, above] {$h_1$} (s1);
\draw[w=ifB] (s1) -- (w2);
\draw[w=ifC] (w2) -- node[lab=ifC, above] {$h_2$} (s2);
\draw[w=ifC] (s2) -- (w3);
\draw[w=ifD] (w3) -- (y);
\end{tikzpicture}
&
\begin{tikzpicture}[rep, scale=0.8, every node/.style={transform shape}, baseline=(current bounding box.center)]
\useasboundingbox (-0.45,-1.0) rectangle (5.65,0.5);
\node[iface=ifA] (in)  at (0,0) {$I_{\mathrm{in}}$};
\node[iface=ifB] (h1)  at (1.75,0) {$I_{h_1}$};
\node[iface=ifC] (h2)  at (3.5,0) {$I_{h_2}$};
\node[iface=ifD] (out) at (5.25,0) {$I_{\mathrm{out}}$};
\draw[->] (in) -- node[mod, above] {$W_1$} (h1);
\draw[->] (h1) -- node[mod, above] {$W_2$} (h2);
\draw[->] (h2) -- node[mod, above] {$W_3$} (out);
\path (h1) edge[loop below, min distance=6mm, looseness=6, ->] node[wl, below, yshift=1pt] {$\sigma$} (h1);
\path (h2) edge[loop below, min distance=6mm, looseness=6, ->] node[wl, below, yshift=1pt] {$\sigma$} (h2);
\end{tikzpicture}
\\[1mm]
\multicolumn{2}{@{}p{\textwidth}@{}}{\small (a) MLP. Each hidden tensor has an interface.
The wire into $\sigma$ and the wire out of it carry the same coordinates, which
is why $\sigma$ is a weightless self-loop.}\\[3mm]
\begin{tikzpicture}[baseline=(current bounding box.center), scale=0.92, every node/.style={transform shape}]
\node[ten] (x) at (0,0) {$x$};
\node[ten] (x) at (0,0) {$x$};
\node[op]  (w1) at (0.9,0) {$W_1$};
\node[wlop] (s1) at (1.95,0) {$\sigma$};
\node[op]  (w2) at (2.95,0) {$W_2$};
\node[sum] (add) at (3.85,0) {$+$};
\node[wlop] (s2) at (4.65,0) {$\sigma$};
\node[op]  (w3) at (5.65,0) {$W_3$};
\node[ten] (y) at (6.45,0) {$y$};
\draw[w=ifA] (x) -- (w1);
\draw[w=ifB] (w1) -- node[lab=ifB, above] {$h_1$} (s1);
\draw[w=ifB] (s1) -- (w2);
\draw[w=ifB] (w2) -- (add);
\draw[w=ifB] (add) -- node[lab=ifB, above] {$h_2$} (s2);
\draw[w=ifB] (s2) -- (w3);
\draw[w=ifD] (w3) -- (y);
\draw[w=ifB] (1.45,0) |- (2.6,-0.65) -| (add);
\end{tikzpicture}
&
\begin{tikzpicture}[rep, scale=0.8, every node/.style={transform shape}, baseline=(current bounding box.center)]
\useasboundingbox (-0.45,-1.35) rectangle (5.65,1.35);
\node[iface=ifA] (in)  at (0,0) {$I_{\mathrm{in}}$};
\node[iface=ifB] (res) at (2.625,0) {$I_{\mathrm{res}}$};
\node[iface=ifD] (out) at (5.25,0) {$I_{\mathrm{out}}$};
\draw[->] (in)  -- node[mod, above] {$W_1$} (res);
\draw[->] (res) -- node[mod, above] {$W_3$} (out);
\path (res) edge[loop, out=115, in=155, min distance=7mm, looseness=6, ->] node[wl, above left, xshift=2pt, yshift=-1pt] {$\sigma$} (res);
\path (res) edge[loop, out=65,  in=25,  min distance=7mm, looseness=6, ->] node[mod, above right, xshift=-2pt, yshift=-1pt] {$W_2$} (res);
\path (res) edge[loop, out=245, in=205, min distance=7mm, looseness=6, ->] node[wl, below left, xshift=2pt, yshift=1pt] {add} (res);
\path (res) edge[loop, out=295, in=335, min distance=7mm, looseness=6, ->] node[wl, below right, xshift=-2pt, yshift=1pt] {$\sigma$} (res);
\end{tikzpicture}
\\[1mm]
\multicolumn{2}{@{}p{\textwidth}@{}}{\small (b) Residual MLP. Addition ties the permutations of $h_1$
and $h_2$: everything from $W_1$ to $W_3$, the skip included, is one colour and
so one interface $I_{\mathrm{res}}$, and the modules between them become
self-loops.}\\[3mm]
\begin{tikzpicture}[baseline=(current bounding box.center), scale=0.92, every node/.style={transform shape}]
\useasboundingbox (-0.35,-2.1) rectangle (6.6,2.7);
\node[ten] (x) at (0,1.0) {$x$};
\node[wlop] (ln1) at (0.75,1.0) {LN};
\node[op, align=center, font=\tiny, minimum width=15mm] (h1) at (2.15,1.8)
      {head 1\\$W_Q^{1},W_K^{1},W_V^{1}$};
\node[op, align=center, font=\tiny, minimum width=15mm] (h2) at (2.15,0.2)
      {head 2\\$W_Q^{2},W_K^{2},W_V^{2}$};
\node[wlop, font=\tiny] (att) at (3.95,1.0) {softmax};
\node[op] (wo) at (4.95,1.0) {$W_O$};
\node[sum] (a1) at (5.72,1.0) {$+$};
\draw[w=ifA] (x) -- (ln1);
\draw[w=ifA] (ln1) to[out=25, in=180] (h1);
\draw[w=ifA] (ln1) to[out=-25, in=180] (h2);
\draw[w=ifB] (h1) to[out=0, in=150] node[lab=ifB, above right, xshift=-3pt] {$q_1$} (att);
\draw[w=ifC] (h1) to[out=-10, in=125] node[lab=ifC, below left, xshift=6pt, yshift=2pt] {$v_1$} (att);
\draw[w=ifB2] (h2) to[out=10, in=235] node[lab=ifB2, above left, xshift=6pt, yshift=-2pt] {$q_2$} (att);
\draw[w=ifC2] (h2) to[out=0, in=210] node[lab=ifC2, below right, xshift=-3pt] {$v_2$} (att);
\draw[w=ifC]  (att) to[out=20, in=160] (wo);
\draw[w=ifC2] (att) to[out=-20, in=200] (wo);
\draw[w=ifA] (wo) -- (a1);
\draw[w=ifA] (0.35,1.0) |- (2.6,2.5) -| (a1);
\node[wlop] (ln2) at (0.75,-1.0) {LN};
\node[op]  (w1) at (1.9,-1.0) {$W_1$};
\node[wlop] (s) at (2.9,-1.0) {$\sigma$};
\node[op]  (w2) at (3.95,-1.0) {$W_2$};
\node[sum] (a2) at (5.72,-1.0) {$+$};
\node[ten] (y) at (6.35,-1.0) {$x'$};
\draw[w=ifA] (a1) -- (5.72,-0.3) -- (0.3,-0.3) -- (0.3,-1.0) -- (ln2);
\draw[w=ifA] (ln2) -- (w1);
\draw[w=ifD] (w1) -- node[lab=ifD, above] {$h_{\mathrm{ff}}$} (s);
\draw[w=ifD] (s) -- (w2);
\draw[w=ifA] (w2) -- (a2);
\draw[w=ifA] (a2) -- (y);
\draw[w=ifA] (0.3,-1.0) -- (0.3,-1.85) -| (a2);
\end{tikzpicture}
&
\begin{tikzpicture}[rep, scale=0.8, every node/.style={transform shape}, baseline=(current bounding box.center)]
\useasboundingbox (-4.3,-2.8) rectangle (3.6,2.8);
\node[iface=ifA] (x) at (0,0) {$I_{x}$};
\node[mod] (hd1) at (-2.1,1.55)  {$W_Q^{1},W_K^{1},W_V^{1}$};
\node[iface=ifB] (q1) at (-3.7,2.2)  {$q_1$};
\node[iface=ifC] (v1) at (-3.7,0.75)  {$v_1$};
\node[mod] (hd2) at (-2.1,-1.55) {$W_Q^{2},W_K^{2},W_V^{2}$};
\node[iface=ifB2] (q2) at (-3.7,-0.75) {$q_2$};
\node[iface=ifC2] (v2) at (-3.7,-2.2) {$v_2$};
\draw[->] (x) -- (hd1); \draw[->] (hd1) -- (q1); \draw[->] (hd1) -- (v1);
\draw[->] (x) -- (hd2); \draw[->] (hd2) -- (q2); \draw[->] (hd2) -- (v2);
\draw[->] (v1) to[bend right=12] node[mod, above, pos=0.42] {$W_O^{1}$} (x);
\draw[->] (v2) to[bend right=38] node[mod, below, pos=0.5] {$W_O^{2}$} (x);
\node[iface=ifD] (ff) at (2.5,0) {$I_{\mathrm{ff}}$};
\draw[->] (x)  to[bend left=30] node[mod, above] {$W_1$} (ff);
\draw[->] (ff) to[bend left=30] node[mod, below] {$W_2$} (x);
\path (ff) edge[loop right, min distance=6mm, looseness=6, ->] node[wl, right, xshift=-1pt] {$\sigma$} (ff);
\path (x)  edge[loop above, min distance=6mm, looseness=6, ->] node[wl, above, yshift=-1pt] {LN}  (x);
\path (x)  edge[loop below, min distance=6mm, looseness=6, ->] node[wl, below, yshift=1pt] {add} (x);
\end{tikzpicture}
\\[1mm]
\multicolumn{2}{@{}p{\textwidth}@{}}{\small
(c) Pre-norm transformer block with two heads. The residual stream forms
one interface $I_x$. Each head has a query--key interface $q_j$ and a
value interface $v_j$: queries and keys share a permutation because
joint relabelling preserves their dot product.}
\end{tabular}
\caption{Conventional diagrams (left) and interface--module graphs
(right). Matching colours identify corresponding coordinate spaces;
shades distinguish heads. Objects sharing a coordinate-permutation
action map to one interface. Operations whose inputs and outputs share
an interface become self-loops. The compiler
(Section~\ref{sec:compiler}) constructs these graphs for the encoder
(Section~\ref{sec:encoder}).}
\label{fig:overview}
\end{figure}

\subsection{Parameters}
\label{sec:units}

A parameter tensor with two channel axes bound to $I$ and $J$ forms a
\emph{rectangular} block of cells $W\in\R^{n_I\times n_J\times k}$;
a bias or normalization vector with one bound axis forms a
\emph{diagonal} block $W\in\R^{n_I\times k}$. The trailing axis
folds $k$ scalars into each cell, such as kernel elements or
normalization scale and shift; its coordinates are not permuted.
A block with its bound axes is a \emph{unit} $u$.
Modules reference one or more units, such as a linear layer's
weight and bias; several occurrences may share a unit. An axis spanning
several interfaces, as when reading a concatenation, is partitioned
into one unit per interface. The representation re-indexes parameters;
inverting the layout recovers the original tensors. Each distinct parameter tensor is represented once.

\subsection{Asserted symmetry}
\label{sec:asserts}

A \emph{relabelling} permutes hidden coordinates and transforms
parameters through their bound axes. For a weight matrix
$W\in\R^{n_J\times n_I}$ from $I$ to $J$ and its bias,
\begin{equation}
  W \longmapsto P_J\, W\, P_I^{\top},\qquad b \longmapsto P_J\, b.
  \label{eq:action}
\end{equation}
where $P_I$ and $P_J$ are the corresponding permutation matrices.

\begin{definition}[Asserted symmetry]
\label{def:asserted}
Independently choose $\pi_I\in\Sn{n_I}$ for each permutation-bearing
interface and the identity for each fixed interface. Apply each
permutation to every stored-state axis bound to that interface,
respecting occurrence-level bindings and alias constraints.
These transformations form
\[
  \Gam=\prod_{I\in\Ifs^\ast}\Sn{n_I}.
\]
Every element of $\Gam$ is asserted to preserve the network's function.
The group permutes coordinates within interfaces without exchanging
distinct interfaces.
\end{definition}

\paragraph{Module exchanges.}
Some supported operations also permit exchanges of complete modules
and associated interfaces. Exchanging attention heads requires jointly
exchanging their query, key and value projections and corresponding
output-projection blocks. All affected typed incidences, fixed-state
constraints and shared-storage bindings must be preserved.

Let $E$ be the group generated by the supported exchange actions
for a compiled model. It permutes compatible factors of $\Gam$,
giving the combined action
\[
  G=\Gam\rtimes E.
\]
This extends $\Gam$ beyond within-interface permutations.
For $h$ interchangeable attention heads of query, key and value width
$d$, the internal discrete action is $(S_d\times S_d)^h\rtimes S_h$.

\begin{theorem}[Function preservation]
\label{thm:guarantee}
For every model accepted by the compiler under its supported operation,
layout, state-action and alias rules:
\begin{enumerate}
\item[(a)] Every coordinate relabelling in $\Gam$, applied consistently
through the bindings, leaves the network's function unchanged for every
admissible value of the represented state.
\item[(b)] Every supported module exchange in $E$, applied to all affected
modules, interfaces and stored-state blocks, leaves the network's function
unchanged.
\end{enumerate}
Consequently, every transformation in $G$ preserves the network's function.
\end{theorem}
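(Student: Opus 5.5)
The proof is by structural induction over the traced computation graph. Every accepted model is a composition of supported operations, so it suffices to attach to each supported operation a local \emph{equivariance certificate}: a lemma saying that if every input activation carries a permutation $P_I$ on each axis bound to $I$, and the operation's parameter units are transformed by \eqref{eq:action} (or its higher-order analogue for rectangular and diagonal blocks with folded trailing axes), then every output activation carries the permutation of its bound interface. These certificates are then chained along a topological order of the graph. At fixed interfaces $\pi_I$ is the identity, so the model's inputs and outputs are untouched, and equality of input--output maps follows. Because the statement quantifies over every admissible value of the represented state, each certificate must hold pointwise in the parameters. Permutation equivariance is exactly such a statement.

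For part (a) I would proceed in three steps.
\begin{enumerate}
\item Enumerate the operation classes of the supported vocabulary and prove the certificate for each.
\begin{itemize}
\item Linear and convolutional maps: $P_J W P_I^\top P_I x = P_J W x$, with kernel axes unpermuted.
\item Elementwise activations and additions: these are self-loops, so all inputs and the output share one interface and hence one $P$.
\item Concatenation and split: the axis is partitioned into one unit per interface, and a block-diagonal permutation commutes with the partition.
\item Normalization: LayerNorm statistics are permutation-invariant, and scale and shift are diagonal units.
\item Attention: $(P q)^\top (P k) = q^\top k$ because queries and keys share an interface, and values are carried through the softmax, which acts on the fixed token axis.
\item Low-rank adapters: the rank interface cancels inside $BA$.
\end{itemize}
\item Handle aliasing and shared units. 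Since each parameter tensor is represented once and every bound axis has exactly one binding, the transformation applied to a shared tensor is well defined. The alias rules are precisely the compiler constraints that force all occurrences reading the same storage to bind the same axis to the same interface. Consistency therefore reduces to the compiler's unification step producing a single interface per equivalence class.
\item Handle layout and reshape rules. Views that fold a head axis into a channel axis are accepted only when the induced permutation factors blockwise. I would state this as the invariant that every accepted layout rule maps a product of per-interface permutations to a permutation of the reshaped axis.
\end{enumerate}

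For part (b), take a generator of $E$, for instance a swap of heads $j$ and $j'$. It exchanges the interfaces $q_j,v_j$ with $q_{j'},v_{j'}$, the corresponding blocks of $W_Q,W_K,W_V$, and the column blocks of $W_O$. Multi-head attention is a sum over heads of $W_O^{j}\,\mathrm{attn}(W_Q^{j}x,W_K^{j}x)W_V^{j}x$, and this sum is invariant under reindexing its summands. The hypothesis that all typed incidences, fixed-state constraints and shared-storage bindings are preserved guarantees that no other module observes the exchange. Each generator therefore preserves the function, and so does the group it generates. Finally, $G=\Gam\rtimes E$ is generated by $\Gam$ and $E$, and function preservation is closed under composition, which gives the consequence.

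The main obstacle is the first step of part (a). There the theorem is only as strong as the operation vocabulary, so the argument is really a finite but exhaustive case check. The delicate cases are reshapes and views interacting with bindings, and operations with nontrivial broadcasting. For these I would first try to prove a single generic lemma, covering any operation expressible as an einsum whose contracted indices lie on one shared interface and whose free indices are bound to distinct output interfaces. I would then handle nonlinear exceptions such as normalization and softmax separately via invariance of their reductions.
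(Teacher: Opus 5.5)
Your proposal is correct and follows essentially the same route as the paper. Both use per-operation equivariance contracts (Equation~\eqref{eq:contract}, with Lemma~\ref{lem:attention} for attention) chained in topological order from fixed inputs to the fixed output, sharing rules that give every occurrence the same transformed state, and, for part (b), a reordering of the summands of $Y=\sum_j U_jO_j^\top+\mathbf{1}b_O^\top$ under a mask common to all heads. The small differences are that the paper explicitly invokes Theorem~\ref{thm:compiler} to show that an element of $\Gam$ induces axis permutations satisfying the emitted constraints, and that your layout step is easier than you expect: supported reshapes may not merge the coordinate axis with another non-singleton dimension, and head splitting is handled inside the attention rule rather than by a generic view.
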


Part (a) follows by composing the supported operations' equivariance,
enforcing fixed-state and alias constraints
(Appendix~\ref{app:soundness}); part (b) follows from the supported
exchange rules. Composition extends these actions to $G$.
Numerical tests separately check their implementation on compiled models.

\begin{remark}[Exactness and completeness]
\label{rem:complete}
$\Gam$ is \emph{exact} relative to the compiler's constraints:
it contains precisely the coordinate relabellings satisfying them
(Theorem~\ref{thm:compiler}, Appendix~\ref{app:compiler-guarantees}).
Soundness establishes function preservation, not the converse.
Transformations outside $\Gam$ may also preserve the function,
including continuous attention symmetries and arbitrary translations
of the key bias (Appendix~\ref{app:symm-cont}).
\end{remark}

\subsection{Size}
\label{sec:shape}

Let $A$ count interfaces, $S=\sum_I n_I$ their coordinates, and $C$
cells: the parameter count after folding kernel elements and
normalization pairs. Sharing interfaces within stages can put $A$
below the layer count; grouped or multi-head layouts can put it above.
Typical graphs have a few high-degree interfaces, such as residual
streams, and many of degree two or three. Table~\ref{tab:counts}
lists examples of compiled networks alongside compiled graph sizes.

\begin{table}[t]
\centering
\small
\setlength{\tabcolsep}{4pt}
\begin{tabular}{@{}lrrrl@{}}
\toprule
Network & Parameters & $A$ & $S$ & Interface widths \\
\midrule
ResNet-18 (20 convolutions)
  & 11.7M & 12 & 2,880 & 64--512, three of each \\
ResNeXt-50 $32{\times}4$d (53 convolutions)
  & 25.0M & 1,029 & 19,008 & mostly 4--32 \\
\midrule
ViT-S/16  & 22.1M  & 157   & 28,032  & 384, 64, 1536 \\
ViT-B/16  & 86.6M  & 301   & 56,064  & 768, 64, 3072 \\
ViT-L/16  & 304.3M & 793   & 148,480 & 1024, 64, 4096 \\
\midrule
GPT-2 Small  & 124.4M   & 301   & 56,064  & 768, 64, 3072 \\
GPT-2 Medium & 354.8M   & 793   & 148,480 & 1024, 64, 4096 \\
GPT-2 Large  & 774.0M   & 1,477 & 277,760 & 1280, 64, 5120 \\
GPT-2 XL     & 1,557.6M & 2,449 & 462,400 & 1600, 64, 6400 \\
\bottomrule
\end{tabular}
\caption{Permutation-bearing interfaces $A$ and coordinates $S$;
fixed interfaces excluded. A transformer with a permutation-bearing
residual interface, width $D$, $h$ heads and $L$ blocks has
$A=1+L(2h+1)$ and $S=D+L(2D+D_{\mathrm{ff}})$:
one residual interface of width $D$, two interfaces of width $D/h$
per head per block, and one feed-forward interface of width
$D_{\mathrm{ff}}$ per block. Configurations: Appendix~\ref{app:counts}.}
\label{tab:counts}
\end{table}

\subsection{Attention symmetries}
\label{sec:attention-example}

Table~\ref{tab:orders} in Appendix~\ref{app:attention-audit} compares the symmetries represented by each
method with attention's permutation group: independent query--key
and value permutations within heads, and exchanges of complete heads
\citep{tran2025tnfn,knyazev2024nino}.
The neural graph of \citet[Appendix~C.4]{kofinas2024} bundles
query, key and value entries into one edge feature and omits head
membership. It therefore permits function-changing cross-head
coordinate mixing while excluding independent value permutations.
Fixed head identifiers \citep{lim2024} prevent such mixing but
also forbid head exchange.
Transformer-NFN uses a hand-derived symmetry specification and
layers that scale cubically with embedding width
(Appendix~\ref{app:related}).
Our compiler recovers these permutation symmetries from supported
implementations, with an encoder whose parameter count is
independent of the input model's size.
\section{The encoder}
\label{sec:encoder}

The representation does not prescribe one encoder, but it fixes what any encoder over it must respect. We state the design at that level, keeping it as close to a standard transformer as the graph allows; the architecture used in our experiments is specified in full in Appendix~\ref{app:encoder}.

\paragraph{State.}
A rectangular unit $u$ bound to interfaces $I,J$ carries
$\E_u\in\R^{n_I\times n_J\times\dE}$; a diagonal unit, such as
a bias tensor, carries $\E_u\in\R^{n_I\times\dE}$.
Each cell encodes one scalar or a vector of folded parameters.
Interface $I$ carries $\Hs_I\in\R^{n_I\times\dH}$, one vector
per coordinate. Each weightless module $o$ carries
$u_o\in\R^{\dH}$, and $g$ learned global tokens
$\G\in\R^{g\times\dH}$ support invariant readouts.
Cells can greatly outnumber interface coordinates, so we use
narrower cell features, $\dE<\dH$, to reduce memory use.

\paragraph{Layer.}
Each encoder layer applies four passes (Figure~\ref{fig:encoder}).

\begin{itemize}[leftmargin=*]
\item \emph{Interface pass.}
Coordinate states are pooled into invariant interface summaries.
Local messages connect interfaces sharing a module, conditioned
on its type and binding roles. Global attention operates on the
summaries and optional global tokens.
Updates are broadcast to interface coordinates.

\item \emph{Delivery: interfaces to modules.}
Each unit gathers its bound interface states, adds receiving-role
embeddings, and projects them into row and column contributions
to its cells.

\item \emph{Module pass.}
Shared maps update cells from their features and pooled row and
column features, following equivariant weight-space layers
\citep{navon2023,zhou2023nfn,zhou2024unf}.
Weightless modules receive a feed-forward update. This is the analogue of the per-token feed-forward block in a conventional transformer.

\item \emph{Emission: modules to interfaces.}
Each unit pools its cells into one message per bound interface
coordinate, with sending-role embeddings.
Each coordinate averages incoming messages from its bound units
and weightless modules.
\end{itemize}

The small number of interfaces makes full attention practical
(Table~\ref{tab:counts}).

\paragraph{Codes.}
Learned embeddings identify parameter kinds, operation types and
binding roles. Numerical descriptors record attributes such as
stride and normalization $\epsilon$; sinusoidal features encode
module depth, ordered binding positions and offsets within folded
tensors. These codes condition the lift and local messages and
are shared across architectures (Appendix~\ref{app:encoder}).

\begin{figure}[t]
\centering
\resizebox{0.95\textwidth}{!}{%
\begin{tikzpicture}[>=Stealth, font=\footnotesize,
  box/.style={draw, rounded corners=2pt, inner sep=4pt, minimum height=7mm, align=center},
  tok/.style={draw=blue!60!black, fill=blue!25, minimum width=5mm, minimum height=5mm, inner sep=1pt}]
  \draw[fill=gray!12] (0,0.75) rectangle (2.0,-0.75);
  \foreach \i in {1,...,4} \draw (0.4*\i,0.75) -- (0.4*\i,-0.75);
  \foreach \i in {1,...,5} \draw (0,0.75-0.25*\i) -- (2.0,0.75-0.25*\i);
  \node[anchor=south] at (1.0,0.85) {$u$: conv or linear, $\E_u\in\R^{n_I\times n_J\times\dE}$};
  \draw[fill=gray!12] (0,-1.15) rectangle (1.6,-2.65);
  \foreach \i in {1,...,3} \draw (0.4*\i,-1.15) -- (0.4*\i,-2.65);
  \foreach \i in {1,...,5} \draw (0,-1.15-0.25*\i) -- (1.6,-1.15-0.25*\i);
  \node[anchor=north] at (0.8,-2.75) {$u'$: another module bound to $I$};
  \draw[fill=gray!12] (0,-3.3) rectangle (0.4,-4.8);
  \foreach \i in {1,...,5} \draw (0,-3.3-0.25*\i) -- (0.4,-3.3-0.25*\i);
  \node[anchor=north] at (0.9,-4.9) {$u''$: norm or bias, diagonal, $\E_{u''}\in\R^{n_I\times\dE}$};
  \draw[fill=blue!10] (4.6,-1.15) rectangle (5.2,-2.65);
  \foreach \i in {1,...,5} \draw (4.6,-1.15-0.25*\i) -- (5.2,-1.15-0.25*\i);
  \node[anchor=south, text width=3.2cm, align=center] at (4.9,-0.2) {interface $I$, $n_I$ coordinates: $\Hs_I\in\R^{n_I\times\dH}$};
  \draw[<->] (2.1,0.0) -- (4.5,-1.35);
  \draw[<->] (1.7,-1.9) -- node[above,font=\scriptsize]{emission $\rightarrow$ \quad $\leftarrow$ delivery} (4.5,-1.9);
  \draw[<->] (0.5,-4.05) -- (4.5,-2.45);
  \node[tok] (s) at (7.0,-1.9) {$s_I$};
  \draw[->] (5.3,-1.9) -- node[above,font=\scriptsize]{pool} (s);
  \node[box, minimum width=3.6cm] (att) at (10.0,-1.9) {local graph step and\\ global attention over summaries};
  \node[tok] (s2) at (8.0,-0.8) {$s_J$};
  \node[tok] (s3) at (8.0,-3.0) {$s_K$};
  \node[tok, fill=green!20] (g) at (10.0,-0.5) {$\G$};
  \draw[->] (s) -- (att); \draw[->] (s2) -- (att); \draw[->] (s3) -- (att); \draw[<->] (g) -- (att);
  \draw[->] (att.south) |- (4.9,-3.5) -- (4.9,-2.75);
  \node[font=\scriptsize, anchor=north] at (8.0,-3.6) {the interface update is broadcast to every coordinate of $I$};
\end{tikzpicture}}
\caption{One layer of the encoder. Every module bound to interface $I$ has an axis of exactly
$n_I$ cells, one per coordinate, whether rectangular ($u$, $u'$) or diagonal ($u''$); emission
pools a module's other axis into one message per coordinate, and delivery broadcasts the
coordinate states back along it. The global pass runs over one summary per interface and its result is broadcast to
all coordinates.}
\label{fig:encoder}
\end{figure}

\paragraph{Lift.}
Scalar weights are encoded with a fixed multi-octave sine--cosine
basis \citep{tancik2020fourier,vaswani2017attention}, optionally
including the raw value, without rescaling.
A cell contains one scalar or a folded vector, such as kernel
elements or a normalization pair, with codes for their kinds
and offsets. The parameter layout retains every entry, while
the fixed-width cell encoding may compress folded vectors.
Module codes are added to cell features.

\paragraph{Fixed coordinates.}
For every fixed interface, the lift adds projected sinusoidal
coordinate encodings to its bound cell axes.
This includes input and output interfaces and residual streams
fixed by externally supplied positional encodings.
Learned positional embeddings are parameters that permute jointly
with the residual stream; permutable axes receive no coordinate
encoding.

\paragraph{Aggregation, normalization, and batching.}
Pooling uses shared maps and means \citep{zaheer2017};
LayerNorm acts only along the feature dimension.
Different architectures are batched as disjoint unions, with
attention and readouts separate for each network.
Matching graphs and metadata permit batching along a shared
batch dimension.
Encoder maps are shared across modules and architectures.

\paragraph{Equivariance.} Invariant network-level readouts $r$ and equivariant
parameter-valued readouts $F$ satisfy
\[
  r(g\theta)=r(\theta),
  \qquad
  F(g\theta)=gF(\theta),
  \qquad g\in G.
\]
Appendix~\ref{app:equivariance} gives the argument.

\section{The compiler}
\label{sec:compiler}

The compiler constructs the graph of Section~\ref{sec:representation}
from a supported PyTorch model and example inputs. It traces the
model in evaluation mode and resolves coordinate axes from tensor
shapes, dtypes and operator rules, without inspecting numerical
parameter values. External input and output coordinates are fixed.
Rules for parameters, buffers and shared storage specify their
permitted transformations; unsupported cases are rejected.

Each operation imposes constraints on coordinate permutations.
A \emph{glue} requires corresponding coordinate blocks to use the
same permutation: activations and normalization glue their input
and output axes, addition glues its inputs to its output, and
attention glues query and key coordinates within each head.
A \emph{split} requires permutations to preserve a block boundary,
such as an input boundary in coordinate-axis concatenation or
a head boundary in attention. A linear layer imposes no input--output glue.

The compiler propagates splits through glues until no new split
is introduced. This procedure terminates and is independent of
processing order. It then partitions each axis at the split
positions and identifies the resulting intervals whose permutations
must agree. Each equivalence class defines an interface, which is
fixed if any member belongs to a fixed axis. Parameters and buffers
are partitioned into units and bound to these interfaces.

The permutations satisfying the emitted constraints are exactly
$\Gam$. Supported head permutations generate the additional
exchanges $E$. Function preservation follows from the operator
rules under their stated assumptions. An independent checker
verifies the interface derivation against the emitted constraints;
numerical tests separately check the implementation.
Appendix~\ref{app:compiler} gives the compiler pipeline and interval
solver in Algorithms~\ref{alg:compile} and~\ref{alg:interval-closure}. Appendix~\ref{app:vocab} lists the supported operations and their
interface bindings.

\section{Results}
\label{sec:results}

Our focus is a shared representation and encoder for networks
of varying architecture and size. Standard fixed-architecture
benchmarks assess predictive quality against established methods.
The transformer zoo tests transfer across architectures, while
separate resource measurements assess computational feasibility
on larger networks.

\subsection{Standard homogenous benchmarks}

\begin{wraptable}{r}{0.56\textwidth}
\vspace{-\intextsep}
\centering
\small
\setlength{\tabcolsep}{4pt}
\caption{Test accuracy for INR classification and Kendall's $\tau$ for
CIFAR-10-GS CNN-zoo generalisation prediction. Higher is better; baseline
results are as published.}
\label{tab:wsl_results}
\begin{tabular}{@{}lccc@{}}
\toprule
& \multicolumn{2}{c}{INR accuracy (\%)} & CNN zoo \\
\cmidrule(lr){2-3}\cmidrule(lr){4-4}
Method & MNIST & F-MNIST & $\tau$ \\
\midrule
DWSNet & $85.71{\scriptstyle\,\pm0.57}$ & $67.06{\scriptstyle\,\pm0.29}$ & -- \\
NFN$_{\mathrm{NP}}$ & $78.50{\scriptstyle\,\pm0.23}$ & $68.19{\scriptstyle\,\pm0.28}$ & $0.922{\scriptstyle\,\pm0.001}$ \\
NFN$_{\mathrm{HNP}}$ & $79.11{\scriptstyle\,\pm0.84}$ & $68.94{\scriptstyle\,\pm0.64}$ & $0.934{\scriptstyle\,\pm0.001}$ \\
NG-GNN & $94.70{\scriptstyle\,\pm0.30}$ & $74.20{\scriptstyle\,\pm0.40}$ & $0.931{\scriptstyle\,\pm0.002}$ \\
NG-T & $97.30{\scriptstyle\,\pm0.20}$ & $74.80{\scriptstyle\,\pm0.90}$ & $0.935{\scriptstyle\,\pm0.000}$ \\
ScaleGMN & $96.57{\scriptstyle\,\pm0.10}$ & $80.46{\scriptstyle\,\pm0.32}$ & $\mathbf{0.941}{\scriptstyle\,\pm0.006}$ \\
ScaleGMN-B & $96.59{\scriptstyle\,\pm0.24}$ & $80.78{\scriptstyle\,\pm0.16}$ & $\mathbf{0.941}{\scriptstyle\,\pm0.000}$ \\
\midrule
UWE (ours) & $\mathbf{98.18}{\scriptstyle\,\pm0.26}$ & $\mathbf{85.33}{\scriptstyle\,\pm0.19}$ & $0.933{\scriptstyle\,\pm0.002}$ \\
\bottomrule
\end{tabular}
\vspace{-\intextsep}
\end{wraptable}

We evaluate three fixed-architecture tasks. INR classification
predicts image labels from the weights of SIRENs fitted to MNIST
or Fashion-MNIST images. CNN-zoo generalisation prediction estimates
CIFAR-10 CNN test accuracy, evaluated by Kendall's $\tau$
\citep{zhou2023nfn,kofinas2024}.

UWE achieves the highest INR accuracies among the compared methods:
98.18\% on MNIST and 85.33\% on Fashion-MNIST, compared with 97.30\%
and 80.78\% for the strongest respective baselines
(Table~\ref{tab:wsl_results}). On the CNN zoo, UWE reaches
$\tau=0.933$, compared with the best reported value of 0.941.

\subsection{Transformer zoo and loss profile prediction}
\label{sec:zoo}

\paragraph{New model zoo.}
Standard benchmarks largely fix the architecture
\citep{unterthiner2020,navon2023,tran2025tnfn}.
To provide a more challenging evaluation, we introduce
architectural variation and predict class-conditional loss
profiles under clean and corrupted inputs.
The benchmark tests transfer to unseen architectures,
including architectures larger than those used for training.

The zoo contains 1,200 CIFAR-100 vision transformers:
100 architectures sampled by log-parameter-count stratification
over 0.11M--9.1M parameters, each trained under 12 recipes crossing
four training-data fractions with three learning rates.
All recipes use augmentation, fixed weight decay and 10,560
optimiser steps. Each network is represented by its
best-validation-loss checkpoint, with a profile of mean
cross-entropy for each of 20 superclasses under clean, blur
and additive-noise conditions (60 targets).
Appendix~\ref{app:transformer-zoo} details the architecture family,
recipes and labelling protocol.

\begin{table}[t]
\centering
\small
\setlength{\tabcolsep}{4pt}
\begin{tabular}{@{}lrrrrrrrrr@{}}
\toprule
& & \multicolumn{3}{c}{common, common} & \multicolumn{2}{c}{common, beyond}
  & \multicolumn{3}{c}{full, full} \\
\cmidrule(lr){3-5}\cmidrule(lr){6-7}\cmidrule(lr){8-10}
Model & Params & MSE & $\tau$ & h & MSE & $\tau$ & MSE & $\tau$ & h \\
\midrule
Predict the training mean & --- & 0.9720 & --- & --- & 1.5510 & --- & 1.2860 & --- & --- \\
\midrule
NG-T, node 16, edge 4 & 72k & 0.9519 & 0.2165 & 23.1$^{\mathrm{H}}$ & --- & --- & --- & --- & --- \\
Transformer-NFN, $(c = 4)$ & 1.18B & 0.8206 & 0.3420 & 2.9$^{\mathrm{H}}$ & --- & --- & --- & --- & --- \\
GMN, width 64 & 0.19M & 0.9594 & 0.3766 & 1.5$^{\mathrm{H}}$ & 1.4273 & 0.4015 & --- & --- & --- \\
NG-GNN, released readout & 0.20M & 0.5263 & 0.5022 & 2.7 & 1.0320 & 0.5795 & 0.4007 & 0.6990 & 13.7 \\
NG-GNN, all-node readout & 0.10M & 0.4411 & 0.5455 & 2.9 & 1.1494 & 0.5114 & 0.3637 & 0.6970 & 13.8 \\
UWE, $\dE{=}16$, $\dH{=}32$ & 73k & 0.2790 & 0.7100 & 8.5 & 0.5361 & 0.7159 & 0.3355 & 0.7192 & 21.7 \\
UWE, $\dE{=}32$, $\dH{=}256$ & 3.46M & \textbf{0.2228} & \textbf{0.7965} & 8.9 & \textbf{0.5089} & \textbf{0.7538} & \textbf{0.3178} & \textbf{0.7778} & 23.0 \\
\bottomrule
\end{tabular}
\caption{\footnotesize Test results by (training cohort, test cohort),
at checkpoints selected by validation MSE.
MSE averages errors over 60 training-standardised profile coordinates;
$\tau$ ranks networks by mean profile loss within each architecture,
then averages over architectures.
h: runtime in hours; $^{\mathrm{H}}$: H100, otherwise A10.
Common, beyond reuses the common, common models, with no separate
runtime reported.
---: the model does not fit the cohort's architectures. Results use one metamodel training seed per configuration.
Variability across training seeds was not estimated due to
computational constraints.}
\label{tab:zoo-cohorts}

\end{table}

\paragraph{Experimental setup.}
Architectures are split 70/15/15 for training/validation/testing,
keeping each architecture's 12 networks together
(Appendix~\ref{app:zoo-splits}).
The \emph{common} cohort contains 43 training, seven validation
and seven test architectures supported by all baselines under
the memory constraints. Methods share its training and validation
networks; Table~\ref{tab:zoo-resource} reports resource costs.
The \emph{beyond-common} cohort contains the remaining eight test
architectures (96 networks, approximately 2.8--9.1M parameters).
All exceed the common training maximum of 2.12M, so evaluation
here tests transfer to larger architectures.
Full-cohort experiments use the complete 70/15/15 split.

We report MSE over 60 profile coordinates standardised using
training-set statistics. For ranking, we compute Kendall's $\tau$
between predicted and true mean profile losses across each test
architecture's 12 networks, then average over architectures.
Baselines use released implementations of NG-GNN and NG-T
\citep{kofinas2024}, GMN \citep{lim2024}, and Transformer-NFN
\citep{tran2025tnfn}.
All configurations use 100 epochs and the same learning rate;
validation MSE selects checkpoints.
UWE uses two sizes; the smaller approximately matches NG-T,
the smallest baseline, in parameter count.

\paragraph{Results.}
On the common test cohort, UWE reaches $\tau=0.7965$ versus
0.5455 for the strongest baseline (Table~\ref{tab:zoo-cohorts}).
The 73k-parameter UWE reaches 0.7100 versus 0.2165 for the
similarly sized NG-T.
On beyond-common architectures, the two UWE sizes reach 0.7538
and 0.7159 versus 0.5795 for the strongest baseline;
the larger UWE also reduces MSE from 1.0320 to 0.5089.
Full-cohort results change both training and test cohorts,
so they do not isolate the effect of adding larger training
architectures.
UWE uses less memory but takes about $3\times$ as long as
NG-GNN to train on the common cohort and $1.6\times$ on
the full cohort (A10).

\subsection{Ablations}
\label{sec:ablations}

\begin{wraptable}{r}{0.56\textwidth}
\vspace{-\intextsep}
\centering
\small
\setlength{\tabcolsep}{3pt}
\captionsetup{font=footnotesize,skip=4pt}
\caption{Common-trained ablations evaluated at their
best-validation-MSE checkpoints. Metrics follow
Table~\ref{tab:zoo-cohorts}.}
\label{tab:zoo-ablation}
\begin{tabular}{@{}lrrrrr@{}}
\toprule
& & \multicolumn{2}{c}{common} & \multicolumn{2}{c}{beyond} \\
\cmidrule(lr){3-4}\cmidrule(lr){5-6}
Variant & Params & MSE & $\tau$ & MSE & $\tau$ \\
\midrule
UWE & 73k & 0.28 & \textbf{0.71} & \textbf{0.54} & \textbf{0.72} \\
Without codes & 67k & \textbf{0.27} & 0.67 & 0.75 & 0.66 \\
Without global attention & 56k & 0.73 & 0.33 & 1.15 & 0.44 \\
Without either & 50k & 0.84 & 0.38 & 1.40 & 0.05 \\
\bottomrule
\end{tabular}
\vspace{-\intextsep}
\end{wraptable}

We ablate codes and global attention in a $2\times2$ design (Table~\ref{tab:zoo-ablation}, details in Appendix~\ref{app:ablation-setup}).
Removing global attention substantially worsens
MSE and ranking.
Removing codes slightly improves common-cohort MSE but lowers
$\tau$ and considerably worsens both beyond-common metrics.

\subsection{Scaling to large networks}
\label{sec:scaling-main}

We measure the memory required to process MLP, CNN, residual-CNN
and transformer targets with 1M--100M parameters
(Table~\ref{tab:scaling-summary}).
These experiments use random target weights and synthetic labels:
they assess computational cost and feasibility, not predictive
performance on trained networks.

UWE separates cell width $\dE$ from the wider interface states,
allowing encoder capacity to increase with modest memory growth.
Across three metamodel sizes, it completes training steps on every
target in the ladder and uses less memory than the parameter-matched
graph baselines wherever they fit. Transformer-NFN is compared at
two channel counts; its parameter count depends on target width.
On native ViT and GPT-2 targets with up to 355M parameters (Table~\ref{tab:app-native} in Appendix~\ref{app:scaling-native}),
UWE also completes training steps, while the matched graph
baselines run out of memory during inference.
Appendices~\ref{app:budget-ladder} and~\ref{app:scaling-native}
give the protocols and additional measurements.

\begin{table}[H]
\centering
\scriptsize
\setlength{\tabcolsep}{3pt}
\begin{tabular}{@{}lr rrr rrr rrr rr@{}}
\toprule
& & \multicolumn{3}{c}{UWE}
& \multicolumn{3}{c}{NG-GNN}
& \multicolumn{3}{c}{GMN}
& \multicolumn{2}{c}{Transformer-NFN} \\
\cmidrule(lr){3-5}\cmidrule(lr){6-8}
\cmidrule(lr){9-11}\cmidrule(lr){12-13}
Target & $P$
& 0.2M & 0.9M & 1.8M
& 0.2M & 0.9M & 1.8M
& 0.2M & 0.9M & 1.8M
& $c{=}2$ & $c{=}10$ \\
\midrule
MLP & 1M
& 0.46 & 0.54 & 0.59
& 2.28 & 5.11 & 7.21
& 3.81 & 8.19 & 11.39
& -- & -- \\
MLP & 30M
& 10.34 & 10.70 & 10.97
& 66.16 & OOM$_{\mathrm{i}}$ & OOM$_{\mathrm{i}}$
& OOM$_{\mathrm{t}}$ & OOM$_{\mathrm{i}}$ & OOM$_{\mathrm{i}}$
& -- & -- \\
CNN & 10M
& 0.55 & 0.63 & 0.69
& 2.63 & 5.80 & 8.09
& 37.47 & OOM$_{\mathrm{t}}$ & OOM$_{\mathrm{i}}$
& -- & -- \\
CNN & 100M
& 4.30 & 4.54 & 4.72
& 25.47 & 56.77 & OOM$_{\mathrm{t}}$
& OOM$_{\mathrm{i}}$ & OOM$_{\mathrm{i}}$ & OOM$_{\mathrm{i}}$
& -- & -- \\
Residual CNN & 100M
& 4.20 & 4.52 & 4.79
& 25.49 & 56.82 & OOM$_{\mathrm{t}}$
& -- & -- & --
& -- & -- \\
Transformer & 1M
& 0.57 & 0.79 & 0.97
& 2.21 & 4.95 & 6.93
& 3.99 & 8.57 & 11.92
& 1.31 & 6.41 \\
Transformer & 3M
& 1.37 & 1.70 & 1.97
& 6.43 & 14.54 & 20.41
& 11.76 & 25.40 & 35.36
& 6.44 & 32.23 \\
Transformer & 10M
& 3.78 & 4.40 & 4.88
& 20.59 & 46.70 & 65.62
& 37.81 & OOM$_{\mathrm{t}}$ & OOM$_{\mathrm{i}}$
& 36.96 & OOM$_{\mathrm{t}}$ \\
\bottomrule
\end{tabular}
\caption{\footnotesize Peak allocated memory (GiB) for a metamodel training step,
including the first AdamW step: one H100, 75\,GiB cap,
batch one.
Targets have random weights; predictive performance is not evaluated.
NG-GNN and GMN match UWE's trainable parameter count within 2.1\%.
Transformer-NFN is unmatched: at the 3M target, $c=2$ and $c=10$
have 381M and 1.924B parameters.
OOM$_{\mathrm{i}}$/OOM$_{\mathrm{t}}$: observed inference/training OOM;
--: unsupported or not applicable.}
\label{tab:scaling-summary}
\end{table}


\section{Limitations}
(i) The compiler's guarantees hold within its rule set and constraint
language, which exclude continuous changes of basis.
(ii) Unsupported constructs, including arbitrary attention masks,
cross-attention, rotary embeddings, recurrence and tensor sharing
without a supported alias rule, are rejected.
(iii) Compressing the parameters bound to each cell into a fixed-width
feature can lose information, particularly for large convolutional kernels.
(iv) Execution requires many small kernels, making deep targets
substantially slower per step than neural graph baselines.
Designing encoders that use modern hardware more efficiently
while respecting the representation's symmetries remains future work.
(v) The zoo covers one transformer family, one dataset and one task;
transfer across model families and tasks remains untested.
(vi) Resource measurements use random weights and establish computational
feasibility, not predictive quality.

\section{Discussion}

Organizing weight-space learning around shared permutation actions
gives a common representation for networks with different
architectures. Interfaces record which parameter axes must be
relabelled together, and modules describe the operations connecting
them. This structure serves two purposes: it specifies the
permutation action and provides a small graph for global
communication. UWE combines attention over interface summaries
with local equivariant updates to parameter features. The compiler
derives the required bindings from operator constraints, allowing
the same representation and encoder to handle new combinations
covered by its rules. The connection between symmetry,
architecture transfer and computational cost therefore follows
from the choice of what the graph represents.

The held-out-architecture results provide evidence that computations
shared at this level can learn useful predictors across changes
in transformer architecture. The resource measurements establish
feasibility on substantially larger networks; predictive quality
at that scale remains to be tested. More broadly, a shared
representation makes transfer possible, but does not ensure that
the learned features will remain useful across model families or
tasks. Training across several families and testing unseen
combinations of supported operations would help establish how
far interface-based learning extends beyond variation within a
transformer family.

\section*{AI use statement}
We used generative AI tools, including OpenAI Codex and Anthropic
Claude Code, to assist with implementation and debugging,
experimental design, data preparation, discussion of mathematical arguments, literature and reference checks,
and manuscript and figure preparation. These tools contributed
suggestions, code and draft text throughout the project. The authors
take full responsibility for the correctness, originality and
reporting of the work.

\section*{Reproducibility statement}
The compiler appendix specifies the supported operations, constraint
language and compilation algorithms. The technical appendix details
the encoder computations and the assumptions underlying the symmetry
guarantees. The experimental appendices describe the transformer zoo,
training procedures, data splits, prediction targets, baseline
configurations and resource measurements. These descriptions
distinguish architectural choices from experimental settings and
state the scope of the reported guarantees and comparisons. We release research code to reproduce the results and explore the proposed constructions. Developing a maintained Python package remains future work.

\section*{Ethics statement}
Learning from network parameters may reduce the cost of model
analysis and selection, although constructing model zoos and training
encoders also require computation. Performance predictions can be
unreliable outside the evaluated architectures and data
distributions, so their use in model selection requires validation
on the intended task. Predictive accuracy on the studied benchmarks
does not establish the fairness, privacy or robustness of the
networks being assessed.

\bibliography{refs}
\bibliographystyle{iclr2027_conference}

\appendix
\section{Related work}
\label{app:related}

Weight-space learning treats neural-network parameters as inputs
to learned models.
\citet{han2026survey} survey weight-space geometry, representation
learning and weight generation, together with their applications
and benchmarks.

\paragraph{Parameter-space symmetries.}
Hidden-unit permutations preserve the functions represented by
feedforward networks \citep{hechtnielsen1990,chen1993}.
Further symmetries depend on the activation, architecture and
parameter values \citep{godfrey2022,grigsby2023}.
\citet{zhao2025} review these symmetries and their applications.
Permutation alignment is used to merge independently trained
networks \citep{ainsworth2023}.
ScaleGMN incorporates both permutation and activation-dependent
scaling symmetries into graph metanetworks
\citep{kalogeropoulos2024}.

\paragraph{Equivariant tensor models.}
DWSNets construct permutation-equivariant layers for MLP weight
spaces \citep{navon2023}.
Neural functionals process MLP and CNN parameters through
equivariant tensor operations \citep{zhou2023nfn}, while neural
functional transformers introduce attention-based weight-space
processing \citep{zhou2023nft}.
Universal neural functionals automate equivariant layer
construction for collections of tensors with specified
permutation actions \citep{zhou2024unf}.
Their weight-space specification assigns shared permutation
identifiers to tensor axes, describing which axes must
transform jointly.

\paragraph{Graph metanetworks.}
\citet{kofinas2024} represent neurons as nodes and weights as
edge features, and process the resulting graphs with message
passing or relational transformers.
\citet{lim2024} construct parameter graphs for diverse neural
architectures, including architectural features such as residual
connections, normalization and parameter sharing.
They relate metanetwork equivariance to automorphisms of the
represented computation.
Message passing communicates along graph edges, whereas
relational transformers permit global interactions between nodes.
Invariant pooling provides network-level representations;
standard pooling constructions are studied by
\citet{zaheer2017}.

\paragraph{Transformer and adapter symmetries.}
Transformer-NFN studies the symmetries of multi-head attention
and constructs polynomial weight-space layers equivariant to
attention-head permutations and continuous query/key and value
basis changes \citep{tran2025tnfn}.
LoRA adapters have a related factorization ambiguity: invertible
transformations of the rank coordinates can preserve the
induced weight update.
W2T uses QR and SVD preprocessing to address this ambiguity,
followed by rank-level and position-level transformers
\citep{han2026w2t}.

\paragraph{Large and heterogeneous model collections.}
SANE processes windows of weight tokens to learn representations
of larger networks and generate network parameters
\citep{schurholt2024}.
\citet{falk2025} learn model representations from heterogeneous
networks collected from public model hubs, extending weight-space
learning beyond purpose-built model zoos.
These settings introduce variation in architecture, training data
and parameter count.

\paragraph{Expressivity.}
\citet{dayan2026} establish expressivity equivalences between
several permutation-equivariant weight-space architectures.
They derive universality results under assumptions on the input
weights and characterize settings where those results fail.
Their analysis distinguishes approximation of maps on parameter
space from approximation of functions of the represented
neural networks.
\section{Symmetries of neural network parameters}
\label{app:symmetries}

This appendix collects the symmetries a weight-space representation has to take a position on, and states which of them $\Gam$ (Definition~\ref{def:asserted}) contains. Surveys of the wider structure are given by \citet{zhao2025} and \citet{grigsby2023}.

Throughout, a network is a map $x\mapsto f(x;\theta)$ with
parameters $\theta\in\Theta$. A \emph{universal parameter symmetry}
is an invertible map $g:\Theta\to\Theta$ satisfying
$f(x;g\theta)=f(x;\theta)$ for every $x$ and $\theta$.
These transformations form a group under composition.
Additional equivalences may hold only at particular parameter values;
we refer to these as \emph{accidental}. Only universal symmetries are candidates for a representation to build in.

\subsection{Permutations}
\label{app:symm-perm}

Let $\sigma$ act coordinatewise. For a layer $h=\sigma(W_1x+b_1)$ followed by $y=W_2h+b_2$ and any $P\in\Sn{n}$,
\begin{equation}
  W_1\mapsto PW_1,\qquad b_1\mapsto Pb_1,\qquad W_2\mapsto W_2P^{\top}
  \label{eq:mlp-perm}
\end{equation}
leaves $y$ unchanged, because $\sigma(Pz)=P\sigma(z)$ (Figure~\ref{fig:symmetries}a). The same holds for a convolution acting on its channel axes, with the elements of a kernel carried along and never permuted among themselves (Figure~\ref{fig:symmetries}b): for $W\in\R^{c_{\mathrm{out}}\times c_{\mathrm{in}}\times k_hk_w}$,
\begin{equation}
  W[:,:,t]\mapsto P_{\mathrm{out}}\,W[:,:,t]\,P_{\mathrm{in}}^{\top}\quad\text{for every kernel element }t.
  \label{eq:cnn-perm}
\end{equation}
Kernel elements are index domains, not coordinate spaces: a permutation of them is a different convolution, not the same one relabelled. An $L$-layer perceptron of hidden widths $n_1,\dots,n_L$ therefore admits at least $\prod_l n_l!$ hidden-coordinate relabellings \citep{hechtnielsen1990,chen1993}, and for perceptrons with generic weights this group is the whole universal symmetry group up to activation-induced factors \citep{chen1993}. Permutation symmetry is what weight alignment and model merging exploit \citep{ainsworth2023}. Everything in $\Gam$ is of this kind; Section~\ref{app:vocab} gives the cases where architecture restricts which permutations are allowed.

\begin{figure}[t]
\centering
\tikzset{
  nd/.style={circle, draw=black!70, fill=white, minimum size=4.0mm, inner sep=0pt},
  hA/.style={nd, fill=red!30, draw=red!60!black, thick},
  hB/.style={nd, fill=blue!25, draw=blue!60!black, thick},
  ed/.style={draw=black!25, line width=0.25pt},
  tile/.style={draw=black!45, fill=gray!8, line width=0.3pt},
  chA/.style={draw=red!60!black, fill=red!30, thick},
  chB/.style={draw=blue!60!black, fill=blue!25, thick},
  lab/.style={font=\scriptsize, inner sep=1.5pt},
  ar/.style={>={Stealth[length=1.6mm]}, thick}}
\begin{subfigure}[b]{0.48\textwidth}
\centering
\begin{tikzpicture}[x=1cm, y=1cm]
\foreach \s/\dx in {0/0, 1/3.7}{
  \foreach \i in {1,2,3} \node[nd] (x\s\i) at (\dx,2.2-0.8*\i) {};
  \foreach \i in {1,2} \node[nd] (y\s\i) at (\dx+2.2,1.8-0.8*\i) {};
}
\node[nd] (h01) at (1.1,1.8) {};
\node[hA] (h02) at (1.1,1.0) {};
\node[hB] (h03) at (1.1,0.2) {};
\node[nd] (h04) at (1.1,-0.6) {};
\node[nd] (h11) at (4.8,1.8) {};
\node[hB] (h12) at (4.8,1.0) {};
\node[hA] (h13) at (4.8,0.2) {};
\node[nd] (h14) at (4.8,-0.6) {};
\foreach \s in {0,1}{
  \foreach \i in {1,2,3} \foreach \j in {1,2,3,4} \draw[ed] (x\s\i) -- (h\s\j);
  \foreach \j in {1,2,3,4} \foreach \i in {1,2} \draw[ed] (h\s\j) -- (y\s\i);
}
\draw[ar,->] (2.55,-1.15) -- node[lab, below] {$P$} (3.45,-1.15);
\node[lab, anchor=north] at (1.1,-0.95) {$W_1,b_1$ \ \ $W_2$};
\node[lab, anchor=north] at (4.8,-0.95) {$PW_1,Pb_1$ \ \ $W_2P^{\top}$};
\end{tikzpicture}
\caption{Perceptron. Exchanging two hidden units permutes the rows of $W_1$ and $b_1$ and the columns of $W_2$. The activation commutes with $P$, so the function is unchanged.}
\label{fig:symm-mlp}
\end{subfigure}
\hfill
\begin{subfigure}[b]{0.48\textwidth}
\centering
\begin{tikzpicture}[x=1cm, y=1cm,
  fm/.style={draw=black!70, fill=white, line width=0.4pt},
  fmg/.style={black!35, line width=0.25pt},
  kc/.style={line width=0.2pt, draw=black!40}]
\useasboundingbox (-0.35,-1.15) rectangle (5.85,2.0);
\foreach \dx/\rr/\bb in {0/1.3/0.5, 3.6/0.5/1.3}{
  \foreach \yi in {0.9,0.1} \foreach \yo in {1.3,0.5,-0.3}
    \draw[black!30, line width=0.3pt] (\dx+0.26,\yi) -- (\dx+1.64,\yo);
  \draw[red!65!black, line width=0.7pt] (\dx+0.26,0.1) -- (\dx+1.64,\rr);
  \foreach \y in {0.9,0.1}{
    \draw[fm] (\dx-0.25,\y-0.25) rectangle ++(0.5,0.5);
    \foreach \k in {1,2}{
      \draw[fmg] (\dx-0.25+0.1667*\k,\y-0.25) -- ++(0,0.5);
      \draw[fmg] (\dx-0.25,\y-0.25+0.1667*\k) -- ++(0.5,0);}
  }
  \draw[fm, fill=red!30, draw=red!60!black, line width=0.7pt] (\dx+1.65,\rr-0.25) rectangle ++(0.5,0.5);
  \draw[fm, fill=blue!25, draw=blue!60!black, line width=0.7pt] (\dx+1.65,\bb-0.25) rectangle ++(0.5,0.5);
  \foreach \y in {1.3,0.5,-0.3}{
    \draw[fm, fill=none] (\dx+1.9-0.25,\y-0.25) rectangle ++(0.5,0.5);
    \foreach \k in {1,2}{
      \draw[fmg] (\dx+1.9-0.25+0.1667*\k,\y-0.25) -- ++(0,0.5);
      \draw[fmg] (\dx+1.9-0.25,\y-0.25+0.1667*\k) -- ++(0.5,0);}
  }
  \draw[red!60!black, line width=0.7pt] (\dx+1.65,\rr-0.25) rectangle ++(0.5,0.5);
  \draw[blue!60!black, line width=0.7pt] (\dx+1.65,\bb-0.25) rectangle ++(0.5,0.5);
  \pgfmathsetmacro{\my}{(0.1+\rr)/2}
  \foreach \cx/\cy/\g in {0/0/22, 1/0/62, 2/0/10, 0/1/78, 1/1/34, 2/1/54, 0/2/14, 1/2/70, 2/2/28}
    \filldraw[kc, fill=black!\g] (\dx+0.95-0.255+0.17*\cx, \my+0.255-0.17*\cy) rectangle ++(0.17,-0.17);
  \draw[red!65!black, line width=0.6pt] (\dx+0.95-0.255,\my-0.255) rectangle ++(0.51,0.51);
}
\node[lab, anchor=south] at (0,1.2) {$c_{\mathrm{in}}$};
\node[lab, anchor=south] at (1.9,1.6) {$c_{\mathrm{out}}$};
\node[lab, anchor=north] at (0.95,-0.66) {$W,b$};
\node[lab, anchor=north] at (4.55,-0.66) {$P_{\mathrm{out}}W,\,P_{\mathrm{out}}b$};
\draw[ar,->] (2.45,0.1) -- node[lab, above] {$P_{\mathrm{out}}$} (3.25,0.1);
\end{tikzpicture}
\caption{Convolution. Channels are feature maps and every edge carries a $k_h\times k_w$ kernel. Exchanging two output channels relabels the output maps and moves their kernels with them; the marked kernel is the same array before and after, since its axis is an index domain and is never reordered.}
\label{fig:symm-cnn}
\end{subfigure}
\caption{Permutation symmetry acts on coordinate spaces and carries the parameters along the axes bound to them. Colour marks the two exchanged coordinates.}
\label{fig:symmetries}
\end{figure}

\subsection{Activation-induced symmetries}
\label{app:symm-act}

A permutation is available whatever the activation, as long as it acts coordinatewise. Particular activations admit more \citep{godfrey2022}. If $\sigma$ is positively homogeneous, $\sigma(\alpha z)=\alpha\sigma(z)$ for $\alpha>0$, as ReLU and leaky ReLU are, then for any positive diagonal $D$
\begin{equation}
  W_1\mapsto DW_1,\qquad b_1\mapsto Db_1,\qquad W_2\mapsto W_2D^{-1}
  \label{eq:rescale}
\end{equation}
preserves the function, giving a factor $\R_{>0}^{n}$ per hidden layer, semidirect with the permutations. If $\sigma$ is odd, as $\tanh$ and $\sin$ are, the same holds with $D$ restricted to $\{\pm1\}^{n}$. Sinusoidal representations admit a second sign symmetry that needs no absorption downstream, since $\sin(\pi-z)=\sin(z)$ gives $(w,b)\mapsto(-w,\pi-b)$ on any unit. Smooth non-homogeneous activations such as GELU, SiLU and the logistic function admit no nontrivial scaling: $\sigma(\alpha z)=\alpha\sigma(z)$ for all $z$ forces $\alpha=1$.

Normalization requires its own symmetry rules. Channel permutations
commute with batch normalization when its channelwise affine
parameters and running statistics are permuted consistently.
Layer normalization over a coordinate axis likewise respects
permutations of that axis when any affine parameters transform with
it. This does not imply invariance to arbitrary independent
rescaling of the channels.

We restrict $\Gam$ to coordinate permutations; activation-dependent
rescalings are outside this scope. Scale-equivariant graph
metanetworks explicitly incorporate such actions
\citep{kalogeropoulos2024}. Independently of this choice, UWE does
not rescale weights on input (Section~\ref{sec:encoder}), so their
magnitudes remain available to the encoder.

\subsection{Continuous symmetries from contraction}
\label{app:symm-cont}

A third family arises wherever two parameter blocks enter the function only through their product. If $f$ depends on matrices $\theta_1\in\R^{m\times r}$ and
$\theta_2\in\R^{r\times n}$ only through their product, an invertible
change of basis in the intermediate space preserves the function:
\[
  (\theta_1,\theta_2)\longmapsto
  (\theta_1G^{-1},G\theta_2),
  \qquad G\in\mathrm{GL}(r).
\]

Three instances matter in current practice. In an attention head with $W_Q,W_K\in\R^{d\times D}$, the logits are $XW_Q^{\top}W_KX^{\top}/\sqrt{d}$, so the head depends on $W_Q$ and $W_K$ only through $W_Q^{\top}W_K$ and
\begin{equation}
  W_Q\mapsto G^{-\top}W_Q,\qquad W_K\mapsto GW_K,\qquad G\in\mathrm{GL}(d),
  \label{eq:qk-gl}
\end{equation}
preserves the function; biases transform with their projections. The same head depends on $W_V\in\R^{d\times D}$ and its block $W_O\in\R^{D\times d}$ of the output projection only through $W_OW_V$, giving a second $\mathrm{GL}(d)$ \citep{tran2025tnfn}. In a LoRA adapter $W+\tfrac{\alpha}{r}BA$ with $B\in\R^{d_{\mathrm{out}}\times r}$ and $A\in\R^{r\times d_{\mathrm{in}}}$, only $BA$ is observable, so $\mathrm{GL}(r)$ acts on the rank space; this is the ambiguity that W2T removes by QR and SVD preprocessing before encoding \citep{han2026w2t}.

A degenerate case of the same phenomenon is a parameter that is unobservable outright. Adding any vector to the key bias $b_K$ shifts every logit of a query row by the same amount, and softmax is invariant to a row shift, so $b_K$ can be changed arbitrarily \citep{namazifar2023}. The compiler transforms $b_K$ with its head's query--key permutation, which is sound but not maximal (Remark~\ref{rem:complete}).

\begin{table}[h]
\centering
\small
\setlength{\tabcolsep}{4pt}
\begin{tabular}{@{}llll@{}}
\toprule
Symmetry & Origin & Group & In $\Gam$ \\
\midrule
hidden-unit permutation
  & coordinatewise activation
  & $\Sn{n}$
  & yes \\
channel permutation
  & convolution on channel axes
  & $\Sn{c}$
  & yes \\
within-group permutation
  & grouped-convolution layout
  & $\prod_j\Sn{c/g}$
  & yes \\
head exchange
  & exchange of complete heads
  & $\Sn{h}$
  & no; included in $G$ \\
\midrule
positive rescaling
  & positively homogeneous $\sigma$
  & $\R_{>0}^{n}$
  & no \\
sign flip
  & odd $\sigma$
  & $\{\pm1\}^{n}$
  & no \\
\midrule
query--key basis change
  & logits depend on $W_Q^{\top}W_K$
  & $\mathrm{GL}(d)$
  & permutation subgroup \\
value--output basis change
  & output depends on $W_OW_V$
  & $\mathrm{GL}(d)$
  & permutation subgroup \\
LoRA factor basis change
  & update depends on $BA$
  & $\mathrm{GL}(r)$
  & permutation subgroup \\
key-bias translation
  & softmax row-shift invariance
  & $\R^{d}$
  & no  \\
\midrule
parameter-dependent equivalences
  & e.g.\ dead or duplicate units
  & N.A.
  & no\\
\bottomrule
\end{tabular}
\caption{Parameter symmetries and their treatment in this paper.
The last column refers to the within-interface group $\Gam$.
A permutation action is included only when the corresponding interfaces
are permutation-bearing and the model's bindings permit it.
\emph{Permutation subgroup} means that $\Gam$ contains the permitted
coordinate permutations within the stated continuous group, without
asserting the remaining transformations.
Supported head exchanges belong to the enlarged group
$G=\Gam\rtimes E$.
The attention product expressions describe bias-free projections;
affine projections require the corresponding bias transformations.}
\label{tab:symmetries}
\end{table}

\subsection{Accidental symmetries}
\label{app:symm-accidental}

Some symmetries depend on the parameter values. For example, the
outgoing weights of a ReLU unit that is identically inactive on the
input domain may change without affecting the function. Identical
hidden units also permit redistribution of their outgoing weights
while preserving their sum. The compiler does not attempt to recover
these parameter-dependent equivalences.

\subsection{Summary}
\label{app:symm-summary}

Table~\ref{tab:symmetries} summarizes the transformations considered
here. The group $\Gam$ contains precisely the within-interface
coordinate permutations permitted by the compiler's constraints.
The enlarged group $G$ additionally contains supported exchanges of
complete modules. Neither group is claimed to contain every
function-preserving transformation.

\section{The transformer zoo}
\label{app:transformer-zoo}

We construct a zoo of 1,200 transformers spanning 100 architectures
to evaluate weight-space learning across changes in width, depth,
head count and feed-forward ratio. Every block includes residual
connections around both sublayers. Each architecture is trained
under the same 12 recipes, varying data availability and learning rate.

\subsection{Source architecture family}
\label{app:zoo-arch}

Each source is a vision transformer for CIFAR-100. A $4\times4$
convolution with stride 4 maps a $32\times32$ image to 64 tokens
of width $d$, followed by fixed sinusoidal positional encodings.
The network has $L$ pre-normalization blocks, each containing
bias-free multi-head attention and a ReLU feed-forward network
of hidden width $rd$, with a residual connection around each
sublayer. All layer normalizations have no affine parameters.
A final normalization, mean pooling over tokens and a classifier
$d\to d\to100$, with an intermediate ReLU, produce the logits.
For compilation, the positional encodings are fixed external
inputs, so their addition fixes the residual-stream interface.
With $C=100$ classes, the learned parameter count is
\begin{equation}
  P(d,L,r)
  = \big(1+L(4+2r)\big)d^2
  + \big(50+C+L(r+1)\big)d+C,
  \label{eq:zoo-params}
\end{equation}
excluding the fixed positional encodings.

The candidate grid uses $L\in\{2,3,4,6\}$,
$d\in\{64,96,\ldots,640\}$, $r\in\{2,3,4\}$ and
$h\in\{2,3,4,6,8,12,16\}$ heads. We retain configurations with
$h\mid d$, head width $32\le d/h\le96$ and
$10^5\le P\le10^7$. To sample 100 distinct architectures, we
divide $[\log 10^5,\log 10^7]$ into 100 equal intervals and draw
one uniform target per interval. For each target, we sample
uniformly among unused candidates within $0.18$ in natural-log
parameter count, choosing the nearest unused candidate if none
qualifies. Sampling uses seed 926314 and no training outcomes.

\begin{table}[t]
\centering
\small
\setlength{\tabcolsep}{5pt}
\begin{tabular}{@{}ll@{}}
\toprule
Architectures & 100 distinct $(d,L,h,r)$ \\
Parameters & 112,676 to 9,061,732; median 1.05M \\
Width $d$ & 64 to 608 \\
Depth $L$ & 2 (30), 3 (31), 4 (23), 6 (16) \\
Heads $h$ & 2 to 16; head width 32 to 96 \\
FFN ratio $r$ & 2 (34), 3 (38), 4 (28) \\
\midrule
Runs & 1,200: one per recipe per architecture \\
Training subset sizes & 5,600; 11,200; 22,500; 45,000 images \\
Labels & 3 conditions $\times$ 20 superclass mean losses \\
Raw FP32 parameter storage & 9.75\,GiB \\
\bottomrule
\end{tabular}
\caption{Transformer zoo summary. Parentheses give architecture
counts at each depth or feed-forward ratio.}
\label{tab:zoo}
\end{table}

\subsection{Training}
\label{app:zoo-train}

We reserve 50 images per fine class from the CIFAR-100 training
set for source validation, using seed 921314. The remaining
45,000 images form the training pool and determine the shared
normalization statistics. The official 10,000-image test set
is used only to compute labels.

The 12 recipes cross nominal training-data fractions
$\{0.125,0.25,0.5,1\}$ with peak learning rates
$\{1.5,3,6\}\times10^{-4}$. Subsets are class-balanced,
deterministic and nested, and are shared across architectures
and learning rates; classwise rounding gives the sizes in
Table~\ref{tab:zoo}. Every recipe uses weight decay $10^{-2}$
and augmentation: a random $32\times32$ crop after reflect
padding by four pixels, followed by a random horizontal flip.

All runs use AdamW with batch size 256 for 10,560 optimizer
updates. The learning rate increases linearly for 1,056 updates,
then follows cosine decay to 1\% of its peak. We clip the
gradient norm at one and use bfloat16 autocast with float32
cross-entropy. A fixed update budget prevents smaller data
subsets from receiving fewer optimization steps. Source
validation is evaluated every 132 updates; the checkpoint
with the lowest clean validation loss supplies the weights
and labels. All 1,200 runs completed the budget without
skipped updates.

Initialization and data-order seeds are derived from each
run's identifier. The manifest records seeds, recipes and
splits; training snapshots retain the optimizer,
random-number-generator and data-order states for resumption.

\subsection{Labels}
\label{app:zoo-labels}

Each selected checkpoint is evaluated on the official test set
under three conditions: clean images; Gaussian blur with
$\sigma=1$ pixel, a $5\times5$ kernel and reflect padding;
and additive Gaussian noise with standard deviation $0.05$
on $[0,1]$ pixel values, clipped to that range. Perturbations
precede normalization. Noise is seeded by image index, so
every network receives the same corrupted images.
Evaluation uses float32 in evaluation mode.

The target comprises the mean cross-entropy within each of
the 20 CIFAR-100 superclasses under each condition, giving
60 values per network. The source still predicts 100 fine
classes: superclass grouping averages fine-label losses over
500 images rather than 100. The profile retains class and
condition differences that a single average would hide.
We also store the 300 fine-class means, per-condition mean
loss and accuracy, and per-example losses.
Figure~\ref{fig:zoo-v2} summarizes the resulting labels.

\begin{figure}[t]
\centering
\includegraphics[width=\linewidth]{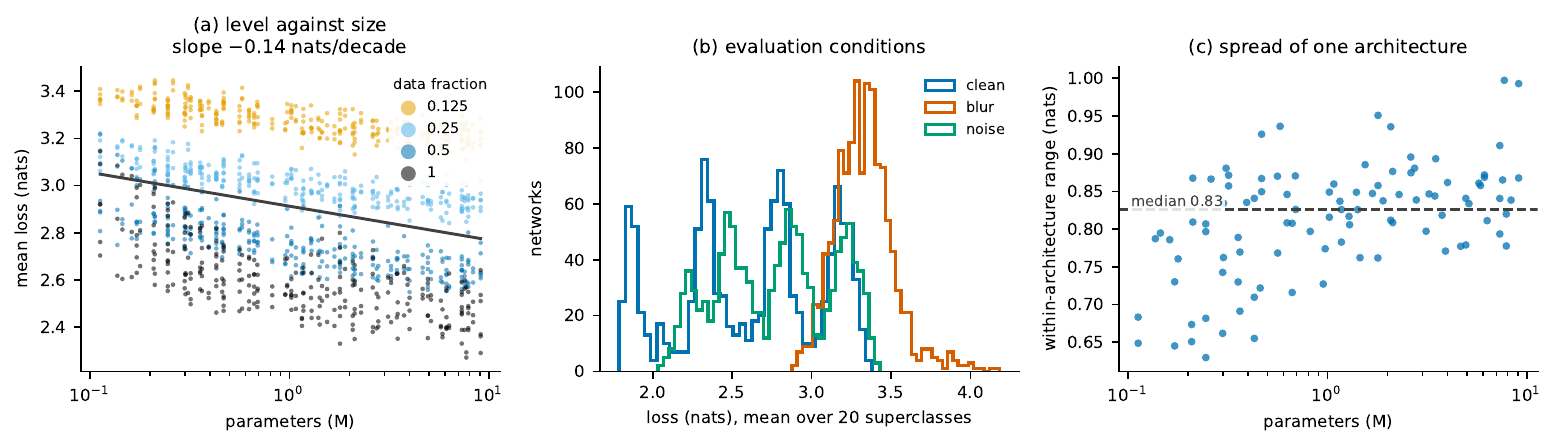}
\caption{Transformer zoo labels.
(a) Mean profile loss versus parameter count, coloured by
training-data fraction; the fitted slope is $-0.14$ nats per
decade.
(b) Distributions of per-condition mean loss.
(c) Range of mean profile loss across the 12 runs of each
architecture; the dashed line marks the median, $0.83$ nats.}
\label{fig:zoo-v2}
\end{figure}

\subsection{Split protocols}
\label{app:zoo-splits}

The manifest defines two protocols independently of training
outcomes. The \emph{held-out-architecture} protocol, used in
Section~\ref{sec:zoo}, assigns 70/15/15 architectures to
training, validation and testing. All runs follow their
architecture, giving 840/180/180 networks and preventing
architecture overlap between splits.

The \emph{familiar-architecture} protocol instead splits runs
within architectures, giving 832/187/181 networks. Every
evaluation architecture is represented in training; this
protocol evaluates generalization to new trained instances
of known architectures.

All experiments in the paper use the \emph{held-out-architecture} protocol.

\section{Experimental setup}
\label{app:experimental-setup}

This section specifies predictive training and evaluation.
Zoo construction is described in Appendix~\ref{app:transformer-zoo};
resource measurements are separate
(Appendix~\ref{app:scaling}).
All predictive runs use AdamW with
$(\beta_1,\beta_2)=(0.9,0.999)$ and $\epsilon=10^{-8}$.

\subsection{Standard benchmarks}
\label{app:standard-setup}

\paragraph{Data.}
We use the released INR and CIFAR-10-GS CNN benchmarks
\citep{navon2023,zhou2023nfn,kofinas2024}.
Each INR is a $2\to32\to32\to1$ SIREN with 1,185 parameters,
fitted to one MNIST or Fashion-MNIST image.
Each dataset contains 55,000/5,000/10,000 training/validation/test
networks.
The CNN zoo uses step-86 checkpoints of networks with three
$3\times3$, 16-channel convolutions and a linear classifier
(4,970 parameters).
The released shuffle and IID split give
12,000/3,000/15,000 networks.
Targets are image classes for INRs and recorded test accuracies
for CNNs.

\paragraph{Configurations.}
Table~\ref{tab:standard-setup} gives task-specific settings. The readout averages coordinates
within interfaces, and global tokens.
The concatenated summaries enter a
$2\dH\to2\dH\to C$ MLP with GELU, with $C=10$ for INRs
and $C=1$ for CNNs.
Source weights are not standardized. INRs use the sine--cosine lift in Equation~\ref{eq:fold}. CNN's lift folds kernel perameters as in Appendix~\ref{app:encoder}. UWE's parameter count roughly follows that of NG-GNN, roughly matching it on INRs, and being about a half of it on the CNN task.

\begin{table}[t]
\centering
\small
\setlength{\tabcolsep}{5pt}
\begin{tabular}{@{}lrrr@{}}
\toprule
Setting & MNIST-INR & Fashion-MNIST-INR & CNN zoo \\
\midrule
$\dE$ & 16 & 16 & 40 \\
$\dH$ & 80 & 80 & 160 \\
Layers & 4 & 4 & 8 \\
Attention heads & 4 & 4 & 8 \\
Global tokens & 4 & 4 & 4 \\
Learned probes & 64 & 64 & 0 \\
Batch size & 128 & 128 & 256 \\
Warm-up updates & 1,000 & 1,000 & 0 \\
Gradient clipping norm & 1 & 1 & 10 \\
Epochs & 300 & 100 & 300 \\
Trainable parameters & 341,082 & 341,082 & 2,744,421 \\
\bottomrule
\end{tabular}
\caption{Standard-benchmark settings.
Counts include the encoder, readout and probes. Hyperparameters were loosely chosen to align the model with baseline methods.}
\label{tab:standard-setup}
\end{table}

\paragraph{Probes and augmentation.}
Following the setup of \citet{kofinas2024}, INR runs use 64 jointly learned two-dimensional probes, initialized
uniformly in $[-1,1]^2$ and evaluated with $\omega_0=30$.
Activations provide coordinate-aligned observation features
(Appendix~\ref{app:providers}).

\paragraph{Training and evaluation.}
Runs use AMSGrad, learning rate $10^{-3}$ and weight decay
$5\times10^{-4}$.
The INR learning rate increases linearly for 1,000 updates,
then remains constant; CNN uses a constant rate throughout. Runs use FP32, PyTorch 2.13, Apple Silicon MPS,
four CPU threads and no data-loader workers.

INRs minimize cross-entropy.
CNN minimizes binary cross-entropy with logits against accuracy
targets in $[0,1]$, applying sigmoid at evaluation.
Validation runs every epoch; checkpoint selection maximizes
accuracy for INRs and Kendall's $\tau_b$ for CNNs. Baseline results in
Table~\ref{tab:wsl_results} are taken from the cited publications.

\subsection{Transformer loss-profile prediction}
\label{app:zoo-metamodels}

\paragraph{Cohorts.}
We use the held-out-architecture split
(Appendix~\ref{app:zoo-splits}).
All 12 networks of an architecture share its split.
The full cohort contains 70/15/15 training/validation/test
architectures, or 840/180/180 networks.

The common development cohort was selected by preflighting
UWE and NG-GNN under a 20\,GiB A10 cap,
NG-T under a 70\,GiB H100 cap, and a two-channel
Transformer-NFN bank padded to width 288 under the
20\,GiB A10 cap.
The intersection contains 50 development architectures.
The final configurations, including GMN, smaller UWE and the
four-channel bank, were subsequently checked under their
respective budgets.

The common cohort contains 43/7/7 architectures
(516/84/84 networks).
Beyond-common comprises the remaining eight test architectures
(96 networks), spanning approximately 2.8--9.1M parameters;
all exceed the common training maximum of 2.12M.
Common-trained models are evaluated on common and beyond-common
test cohorts separately.
Full-cohort models use the full training, validation and test
splits.
Partial-coverage results include the number of networks evaluated;
unsupported inputs and failed evaluations are recorded separately.

\paragraph{UWE.}
The first two rows of Table~\ref{tab:encoder-configurations} specify model configuration for the transformer zoo experiments.
All use four layers, four attention heads, and sine--cosine weight
features.

Both headline sizes use structural codes.
The readout applies learned LayerNorm to coordinate features,
averages coordinates within interfaces and interfaces equally,
then applies a $\dH\to2\dH\to60$ MLP with GELU.
Each source architecture is compiled once; its bindings are
reused across trained instances.

\begin{table}[t]
\centering
\small
\setlength{\tabcolsep}{5pt}
\begin{tabular}{@{}lrrrrcr@{}}
\toprule
Experiment & $\dE$ & $\dH$ & $g$
& Layers & Readout & Parameters \\
\midrule
Loss profile, small
& 16 & 32 & 0 & 4 & $H$ & 72,636 \\
Loss profile, large
& 32 & 256 & 0 & 4 & $H$ & 3,458,684 \\
\midrule
Budget ladder, small
& 16 & 56 & 4 & 4 & $H,G$ & 190,057 \\
Budget ladder, medium
& 16 & 128 & 4 & 4 & $H,G$ & 906,529 \\
Budget ladder, large
& 16 & 180 & 4 & 4 & $H,G$ & 1,759,277 \\
\midrule
Native ViT/GPT-2
& 4 & 128 & 4 & 4 & $H,G$ & 887,881 \\
Native ViT/GPT-2
& 8 & 128 & 4 & 4 & $H,G$ & 893,649 \\
Native ViT/GPT-2
& 16 & 128 & 4 & 4 & $H,G$ & 906,529 \\
\bottomrule
\end{tabular}
\caption{UWE configurations for loss-profile prediction and resource
measurements;
$g$ is the number of global tokens.
Readout uses pooled interface features $H$ and, when present,
global-token features $G$.
Counts include the encoder and prediction head.
The parameter-matched native comparison uses $\dE=4$.}
\label{tab:encoder-configurations}
\end{table}

\paragraph{NG-GNN and NG-T.}
We minimally adapt the released implementations of \citet{kofinas2024}.
The graph includes a patch linear map from 48 fixed coordinates,
position nodes, separate Q/K and V coordinate groups,
head membership, identity edges for non-affine normalization
and residual skip edges.
Biases enter node features; Q and K use separate edge channels.
Entrywise checks verify that every learned source parameter
appears once.

NG-GNN uses node and edge width 32,
edge updates and node-wise LayerNorm.
We compare concatenation of the 100 output-node features
(the released readout) with all-node mean pooling.
NG-T uses two relational-transformer layers, eight heads,
node width 16, edge width four and output-node concatenation.
Both retain learned positional and activation embeddings,
128-dimensional sinusoidal weight features and zero dropout.

Beyond the released configuration, we evaluate NG-GNN,
the strongest graph baseline, with an all-node readout
to test whether adapting its aggregation improves performance.
Both variants are reported.
Computational constraints precluded extensive
method-specific hyperparameter searches.

\paragraph{GMN.}
The released EdgeMPNN-DiT \citep{lim2024} uses four width-64
layers and a DeepSets edge readout.
We omit its final sigmoid for standardized regression.
The adapter combines released convolution, attention,
feed-forward and linear constructors.
Bias-free attention receives fixed zero bias vectors;
automatic residual edges are retained.
These contribute $4d$ zero-bias and $2d$ residual edges per block,
in addition to learned-parameter edges.
Head partitions, fixed positional encodings, non-affine
normalization, ReLU and token averaging are not explicitly encoded.

\paragraph{Transformer-NFN.}
The bank uses released Transformer-NFN layers
\citep{tran2025tnfn}, six block slots and maximum width 288, with zero padding.
The encoder has one four-channel equivariant stage,
followed by the released invariant construction. Blocks use LayerNorm.
Patch-embedding, classifier and encoder branches each produce
ten features, concatenated and mapped linearly to 60 outputs. Width 288 balances coverage and bank size.
Wider sources require a larger bank and are unsupported by this
configuration. This construction already has 1.18B parameters.

\begin{table}[t]
\centering
\small
\setlength{\tabcolsep}{6pt}
\begin{tabular}{@{}lr@{}}
\toprule
Metamodel & Trainable parameters \\
\midrule
UWE, $\dE=16$, $\dH=32$ & 72,636 \\
UWE, $\dE=32$, $\dH=256$ & 3,458,684 \\
NG-GNN, output-node concatenation & 203,516 \\
NG-GNN, all-node mean & 102,140 \\
NG-T, node 16, edge 4 & 72,388 \\
GMN, width 64 & 185,340 \\
Transformer-NFN, width-288 four-channel bank & 1,179,934,498 \\
\bottomrule
\end{tabular}
\caption{Headline parameter counts, including input encodings
and prediction heads.
Small UWE approximately matches NG-T;
the full comparison is not parameter-matched.}
\label{tab:zoo-metamodels}
\end{table}

\paragraph{Objective and optimization.}
Each target coordinate is standardized using its training-cohort
mean and population standard deviation, with equal total weight
per architecture.
These statistics are fixed for validation and testing, and
recomputed for common versus full training.
The objective is MSE over 60 standardized coordinates,
weighted equally across architectures.
Each architecture has 12 networks.
Input weights are unstandardized; no weight or permutation
augmentation is applied.

All models use seed 715 and a 100-epoch budget.
AdamW uses no AMSGrad, peak learning rate $3\times10^{-4}$
and weight decay $10^{-4}$.
One warm-up epoch precedes cosine decay towards 5\% of the peak.
Each epoch visits all training networks in shuffled order.
Physical batch size is one, with gradient accumulation over
16 networks. Gradients are clipped at norm one.
Common/full cohorts give 33/53 optimizer updates per epoch.
Validation runs every epoch; selection minimizes
architecture-averaged validation MSE.

Runs use FP32, PyTorch 2.11 and CUDA 12.8.
UWE and NG-GNN use A10s with a 20\,GiB allocator cap;
GMN, NG-T and Transformer-NFN use H100s with a 70\,GiB cap due to memory demand slightly above A10s capcity.

\paragraph{Metrics.}
For each network, MSE is the mean squared error over its 60
training-standardized targets. For ranking, we convert predictions back to the original loss
units and average each predicted and true profile over its 60
coordinates. Within each test architecture, we compute Kendall's $\tau_b$
between the predicted and true scores of its 12 networks.
We report the mean of these architecture-level coefficients.
The training-mean baseline predicts the same profile for every
network, so its Kendall correlation is undefined and shown as a dash.

\subsection{Ablations}
\label{app:ablation-setup}

The $2\times2$ common-cohort ablation crosses structural codes
with global attention.
All variants use small UWE's widths, with the same training protocol as the transformer zoo experiment.

\paragraph{Code removal.}
Each parameter-kind, operation-type and sending or receiving
role table has one shared learned embedding.
Descriptors and ordered-binding contributions are zeroed,
depth encodings are removed, and folded offsets use a constant
$(0,0)$ geometry code.
Fixed-coordinate encodings remain. As most parameters do not require folding at lift (unlike for example in CNNs, where geometric codes provide positional information within a kernel), this ablation does not
isolate convolutional offset information.

\paragraph{Attention removal.}
We remove attention between interface summaries. No variant uses global tokens.
This measures the contribution of global interface-to-interface communication, which the interface-module graph representation makes feasible at scale.

Parameter counts are 72,636 for full UWE, 67,116 without codes,
55,740 without global attention, and 50,220 without either.

\section{Resource measurements}
\label{app:scaling}

These experiments measure metamodel memory and execution time,
not predictive performance.
Zoo measurements use trained source weights; ladder and native
measurements use random source weights.
All training-step measurements use synthetic losses.

\subsection{Transformer zoo}
\label{app:zoo-resource}

These measurements support the resource and architecture-coverage
discussion in Section~\ref{sec:zoo}.
Table~\ref{tab:zoo-resource} reports the results.

\paragraph{Protocol.}
We use the configurations, representations and AdamW settings
from Appendix~\ref{app:zoo-metamodels}. For each of 85 train and validation architectures, the first available
trained network supplies the input.
A step predicts the 60 outputs, minimizes their mean square,
backpropagates, clips gradients at norm one and updates the
metamodel.
Each metamodel is instantiated once per sweep.
Packing and transfer are timed separately and excluded.
Peak allocated memory is reset per architecture and includes
inputs, metamodel parameters, gradients and optimizer state.

\paragraph{Hardware and failures.}
Measurements use PyTorch 2.11 and CUDA 12.8.
OOM includes observed allocation failures and, on A10, recorded
allocator/NVML errors under memory pressure.

\begin{table}[t]
\centering
\small
\setlength{\tabcolsep}{3.5pt}
\begin{tabular}{@{}lrrrrrrrr@{}}
\toprule
& & \multicolumn{2}{c}{Fits, of 85}
& \multicolumn{4}{c}{Peak GiB by target size, H100}
& Step \\
\cmidrule(lr){3-4}\cmidrule(lr){5-8}
Method & Params & H100 & A10
& 0.1--0.3M & 0.3--1M & 1--3M & 3--10M & (s) \\
\midrule
UWE, $\dE=32$, $\dH=256$
& 3.46M & 85 & 85 & 0.72 & 1.43 & 3.24 & 9.17 & 0.81 \\
UWE, $\dE=16$, $\dH=32$
& 73k & 85 & 85 & 0.22 & 0.46 & 1.13 & 3.19 & 0.82 \\
NG-GNN, width 32
& 0.20M & 85 & 85 & 0.54 & 1.60 & 4.32 & 12.91 & 0.10 \\
GMN, width 64
& 0.19M & 85 & 69 & 1.66 & 5.13 & 14.58 & 44.14 & 0.15 \\
NG-T, node 16, edge 4
& 72k & 52 & 26 & 21.93$^{17}$ & 57.54 & 68.47$^{12}$ & -- & 1.47 \\
Transformer-NFN, four-channel bank
& 1.18B & 65 & NR & 22.05 & 22.05 & 22.05$^{22}$ & 22.06$^{2}$ & 0.09 \\
\bottomrule
\end{tabular}
\caption{Zoo resource use: 
Configurations follow Table~\ref{tab:zoo-metamodels} with batch one.
NG-GNN uses the released output-node concatenation readout.
The 85 architectures span widths 64--608 and 0.11M--8.3M parameters.
Fits counts completed steps under 70\,GiB H100 and 20\,GiB A10 caps.
Peak is the maximum over successful H100 cases per bin
(18/23/25/19 architectures); superscripts give partial-coverage
counts.
Step is the median over each method's successful H100 cases.
GMN and NG-T failures are OOMs except one NG-T implementation
failure; Transformer-NFN coverage is width-limited.
--: no completed measurement; NR: not evaluated.}
\label{tab:zoo-resource}
\end{table}
\subsection{Measurement protocol for scaling experiments}
\label{app:scaling-protocol}

This protocol applies to the budget ladder reported in
Section~\ref{sec:scaling-main} and Table~\ref{tab:scaling-summary},
and to the native-model experiments in
Appendix~\ref{app:scaling-native}.
The zoo measurements follow Appendix~\ref{app:zoo-resource}.

Ladder and native targets use random initialization with a fixed
seed.
Each case runs in a separate process on one H100 80\,GB:
75\,GiB allocator cap, four CPU threads, PyTorch 2.11,
CUDA 12.8, FP32, autocast and TF32 disabled, batch one.

A training step comprises squared error against a fixed synthetic
target, backward propagation, gradient clipping at norm one and
AdamW with learning rate $10^{-4}$, and zero weight decay.
Inference and training are measured separately.
We first perform one forward pass or complete training step and
record its peak allocated GPU memory.
We then perform two warm-up repetitions followed by five timed
repetitions, synchronizing the GPU before and after each.
Reported time is the median of the five timed repetitions.
Reported memory is the largest peak allocated GPU memory across
the first repetition and the five timed repetitions, including
optimizer-state initialization for training.
Target generation, representation construction and transfer are
excluded and measured separately; learned input encoding is included.

Successful training requires finite loss and a finite, non-zero
gradient norm.
OOMs retain their phase (inference or training step); timeouts and implementation errors
are separate.

\subsection{Budget ladder}
\label{app:budget-ladder}

\paragraph{Targets.}
We scale MLP, CNN, residual CNN and Transformer families
to approximately 1M, 3M, 10M, 30M and 100M parameters.
MLPs have four linear layers with ReLU.
CNNs have three convolutions; residual CNNs have a convolutional
stem and three two-convolution residual blocks.
Both use $3\times3$ kernels, ReLU, global average pooling and
a linear classifier.

Transformer cores have two blocks, eight heads, bias-free
attention, LayerNorm and ReLU feed-forward layers
of width $4d$, without residual connections or positional encoding.
Widths are the smallest meeting the nominal count, rounded to
multiples of eight for transformers.

\paragraph{Parameter matching.}
UWE, NG-GNN and GMN use four rounds at approximately
0.2M, 0.9M and 1.8M metamodel parameters.
UWE follows Table~\ref{tab:encoder-configurations}, with
190,057, 906,529 and 1,759,277 parameters.

NG-GNN uses released PNA layers \citep{kofinas2024},
sparse execution, output-node concatenation and graph LayerNorm.
Node and edge widths are 45/103/146 for MLPs,
46/104/146 for CNNs and residual CNNs, and
45/103/145 for transformers.
GMN \citep{lim2024} uses widths 66/144/201,
DeepSets edge readout and the released sigmoid.

Counts include input encoding, embeddings, backbone and readout.
Graph widths minimize the difference from each UWE budget;
the largest mismatch is 2.042\%.
Transformer-NFN \citep{tran2025tnfn} uses two or ten channels
and is not parameter-matched.

\subsection{Complete ViT and GPT-2 targets}
\label{app:scaling-native}

\paragraph{Targets.}
We use torchvision \texttt{VisionTransformer}:
ViT-B/16 has width 768, 12 blocks and 12 heads;
ViT-L/16 has width 1,024, 24 blocks and 16 heads.
Both retain $224\times224$ inputs, $16\times16$ patches,
a class token, learned positions and a 1,000-class head.

Hugging Face \texttt{GPT2LMHeadModel} supplies GPT-2 Small
(768, 12, 12) and Medium (1,024, 24, 16), with vocabulary
size 50,257, 1,024 learned positions and tied token-embedding
and output weights.
UWE traces one image or eight tokens for shape propagation.
Complete parameter tensors are retained; shared parameters
are encoded once.

\paragraph{Metamodels.}
UWE follows Table~\ref{tab:encoder-configurations}, with
$\dH=128$, $\dE\in\{4,8,16\}$ and cell chunks of 65,536.
Graph baselines match the $\dE=4$ model's 887,881 parameters:
NG-GNN width 94 for ViT and 17 for GPT-2; GMN width 143.
The largest mismatch is 1.984\%.
Coordinate embeddings dominate NG-GNN's GPT-2 parameter count,
so total-count matching does not equalize message-passing capacity.

NG-GNN uses node-wise LayerNorm and all-node mean pooling;
GMN omits its final sigmoid.
These differ from the ladder's graph LayerNorm,
output-node concatenation and retained GMN sigmoid.

Methods share targets and the protocol in
Appendix~\ref{app:scaling-protocol}, with a 1,200\,s case limit.
Table~\ref{tab:app-native} combines the matched $\dE=4$
comparison with earlier $\dE=8,16$ measurements.

\begin{table}[t]
\centering
\small
\setlength{\tabcolsep}{5pt}
\begin{tabular}{@{}lrlrrrr@{}}
\toprule
Target & $P$ (M) & Encoder
& Infer.\ (s) & Infer.\ (GiB) & Step (s) & Step (GiB) \\
\midrule
ViT-B/16 & 86.57 & UWE, $\dE=4$
& 2.66 & 3.64 & 17.95 & 17.72 \\
& & UWE, $\dE=8$
& 2.54 & 5.93 & 17.37 & 24.35 \\
& & UWE, $\dE=16$
& 2.57 & 11.12 & 17.66 & 37.39 \\
& & NG-GNN
& OOM & -- & -- & -- \\
& & GMN
& OOM & -- & -- & -- \\
\midrule
ViT-L/16 & 304.33 & UWE, $\dE=4$
& 7.83 & 12.49 & 54.60 & 58.19 \\
& & UWE, $\dE=8$
& 7.38 & 20.57 & OOM & -- \\
& & UWE, $\dE=16$
& 7.32 & 38.80 & OOM & -- \\
& & NG-GNN
& OOM & -- & -- & -- \\
& & GMN
& OOM & -- & -- & -- \\
\midrule
GPT-2 Small & 124.44 & UWE, $\dE=4$
& 3.12 & 5.09 & 22.81 & 22.27 \\
& & UWE, $\dE=8$
& 2.91 & 8.53 & 22.29 & 31.82 \\
& & UWE, $\dE=16$
& 2.92 & 16.03 & 22.25 & 51.00 \\
& & NG-GNN
& OOM & -- & -- & -- \\
& & GMN
& OOM & -- & -- & -- \\
\midrule
GPT-2 Medium & 354.82 & UWE, $\dE=4$
& 8.29 & 14.37 & 61.37 & 63.75 \\
& & UWE, $\dE=8$
& 7.79 & 23.98 & OOM & -- \\
& & UWE, $\dE=16$
& 7.93 & 45.31 & OOM & -- \\
& & NG-GNN
& OOM & -- & -- & -- \\
& & GMN
& OOM & -- & -- & -- \\
\bottomrule
\end{tabular}
\caption{Native-model resource measurements: one H100,
75\,GiB cap, FP32, batch one, four rounds and checkpointing.
UWE has 0.888M/0.894M/0.907M parameters at $\dE=4/8/16$,
with $\dH=128$.
Graph baselines match $\dE=4$ within 2\%.
Memory includes the initial forward or AdamW step;
time is the median of five repetitions.
OOM: failure in the indicated phase;
--: no completed measurement.
Random source weights are used; predictive performance
is not measured.}
\label{tab:app-native}
\end{table}
\section{The compiler}
\label{app:compiler}

The compiler constructs the representation of
Section~\ref{sec:representation} from a supported PyTorch model and example
inputs. It traces the model in evaluation mode and uses tensor shapes,
dtypes, operation attributes and parameter-sharing information to infer
interfaces and parameter bindings. This inference does not depend on
numerical parameter values. Parameters and buffers are subsequently
assigned to the corresponding units.

The compiler recovers the permutation group defined by its constraints.
Function preservation additionally depends on the supported operator,
layout and parameter-sharing rules.

\subsection{Overview}
\label{app:compiler-walkthrough}

\paragraph{Coordinate axes.}
The compiler records the computational graph and identifies the feature
or channel dimension of each relevant tensor. We call this its
\emph{coordinate axis}. Linear layers specify the final feature dimension;
convolutions specify the channel dimension. Layout operations, such as
transposes, relate coordinate axes across tensors. Axis constraints
propagate in both directions through the graph.

\paragraph{Permutation constraints.}
Each operation imposes constraints on its input and output permutations.
A linear layer without parameter sharing allows independent input and
output permutations, with corresponding permutations of its weights and
bias. Elementwise addition requires the same permutation on both inputs
and the output. Concatenation along the coordinate axis requires the output
permutation to preserve the input blocks and agree with each input
permutation within its block.

The compiler represents these requirements using three constraints:
a \emph{glue} equates permutations on two coordinate intervals;
a \emph{split} prevents permutations across a boundary; and a
\emph{fixed} axis permits only the identity.

\paragraph{Interface inference.}
The compiler propagates splits through glues until no new split is
introduced. It then partitions each coordinate axis into intervals between
consecutive split positions, called \emph{atoms}. Atoms identified by glues
form equivalence classes. Each class defines an interface with one shared
permutation. If any atom belongs to a fixed axis, the entire class is fixed.

\paragraph{Example.}
Let $h_1,h_2,h_3$ be the outputs of three linear layers with dimensions
$2$, $3$ and $5$, respectively:
\[
\begin{aligned}
 h_i&=W_i x+b_i,\qquad i=1,2,3,\\
 c&=\operatorname{concat}(h_1,h_2),\qquad
 s=c+h_3,\qquad y=W_o s+b_o.
\end{aligned}
\]
The layers do not share parameters, and batch dimensions are omitted.
Concatenation introduces a split at position~2 of $c$. Addition propagates
this split to $h_3$ and $s$. Their first two coordinates share an interface
with $h_1$; their remaining three share an interface with $h_2$. The two
interfaces admit independent permutations, giving $S_2\times S_3$.
The input and output coordinates remain fixed. The same partition splits
the rows of $W_3$ and the columns of $W_o$ into the corresponding units. See Figure~\ref{fig:algo-example} for a schematic illustration.

\begin{figure}[t]
    \centering
    \includegraphics[width=\linewidth]{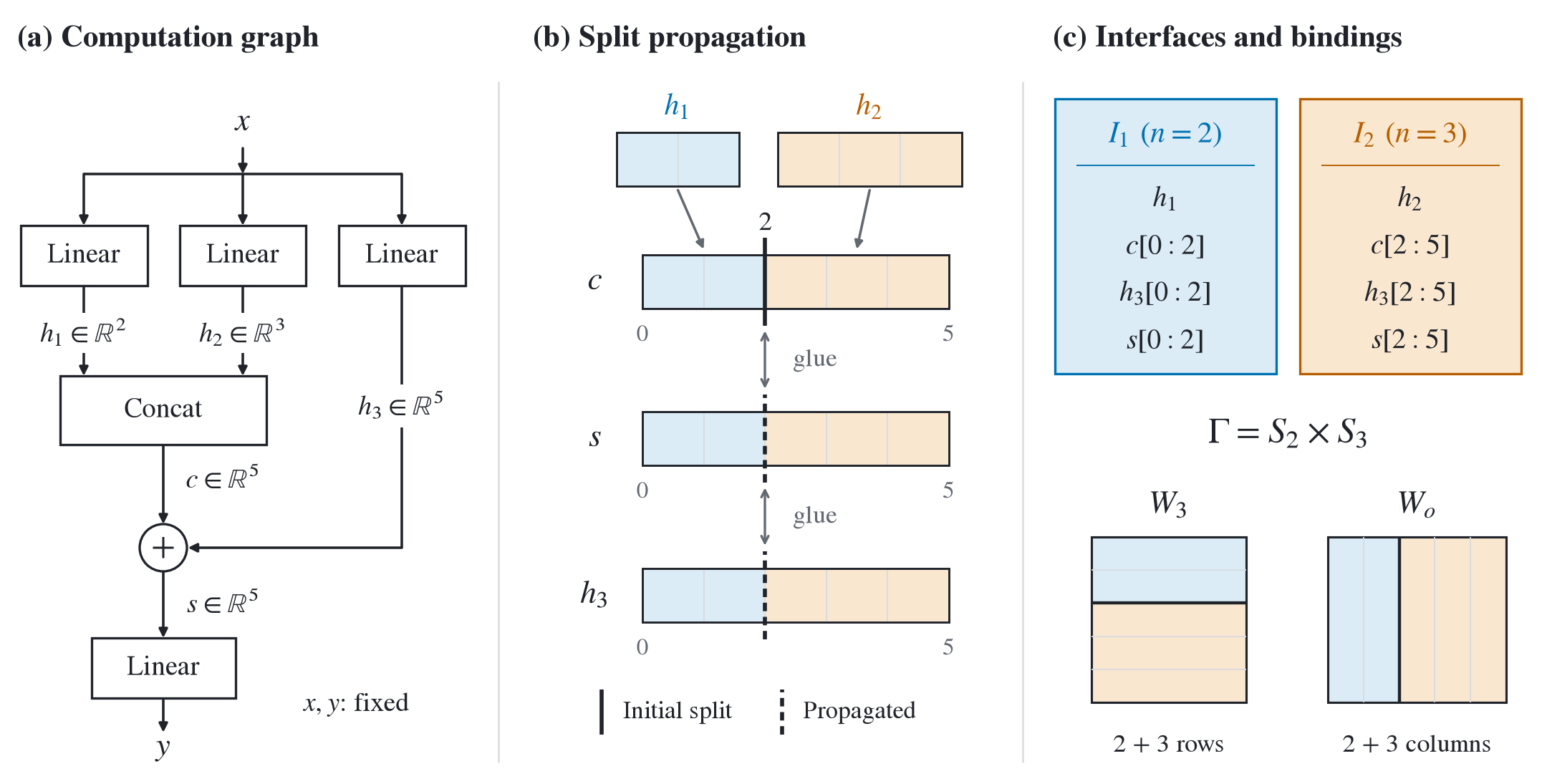}
    \caption{Interface inference for concatenation and addition.
    Concatenation splits $c$ at position~2; addition propagates this
    split to $s$ and $h_3$. The resulting atoms form two interfaces
    with permutation group $S_2\times S_3$. Colors indicate their
    bindings to rows of $W_3$ and columns of $W_o$. Input and output
    coordinates remain fixed.}
    \label{fig:algo-example}
\end{figure}

\paragraph{Parameter bindings and head permutations.}
The inferred interfaces determine the permutations of parameters and
buffers. Binding rules specify the affected tensor dimensions and slices,
including constraints from parameter sharing. Attention-head permutations
are handled separately: a head permutation must also permute the
corresponding query, key and value projections and the matching blocks of
the output projection.

\subsection{Constraints and operator rules}
\label{app:compiler-constraints}

Let $\mathcal X$ be a finite set of coordinate axes. Axis $t$ has width
$n_t\ge1$ and coordinates $[n_t]=\{0,\ldots,n_t-1\}$. Axes may correspond
to traced tensors or be introduced by an operator rule. In particular,
attention rules introduce internal query, key and value axes. Integer
indices and supported shape computations do not receive coordinate
permutation factors.

The constraint system is $\mathcal C=(\mathcal X,n,S,L,F)$, where $S$
contains splits, $L$ contains glues and $F\subseteq\mathcal X$ contains
fixed axes. A permutation assignment is a tuple
$(\pi_t)_{t\in\mathcal X}$ with $\pi_t\in S_{n_t}$.
Intervals below contain integer coordinates.

\begin{itemize}
\item A split $(t,c)$, with $0\le c\le n_t$, prohibits permutations
      across position $c$:
      \[
        \pi_t(i)<c \iff i<c,
        \qquad i\in\{0,\ldots,n_t-1\}.
      \]
\item A glue $(t,u;\,t',u';\,k)$ relates two blocks of $k\ge1$
      coordinates, with $0\le u\le n_t-k$ and
      $0\le u'\le n_{t'}-k$.
      It imposes splits at $u,u+k$ on $t$ and at $u',u'+k$
      on $t'$. Both blocks must use the same permutation
      $\sigma$ of $\{0,\ldots,k-1\}$:
      \[
      \begin{aligned}
        \pi_t(u+j)   &= u+\sigma(j),\\
        \pi_{t'}(u'+j) &= u'+\sigma(j),
      \end{aligned}
      \qquad j=0,\ldots,k-1.
      \]
      The endpoint splits are part of the constraint: both permutations
      preserve the specified interval and agree in relative coordinates.
\item A fixed axis $t\in F$ requires $\pi_t=\mathrm{id}$.
\end{itemize}

\paragraph{Axis inference.}
Operator rules specify coordinate-axis positions, and layout rules
propagate these positions in both directions. Conflicting assignments
are rejected. Supported reshapes preserve the coordinate axis without
combining it with another non-singleton dimension.

If propagation permits several positions for an unassigned tensor, the
implementation selects the lowest index. Components without an assigned
axis are initialized at tensor dimension~1, using zero-based indexing,
and propagation resumes. Tensors for which this assignment is unavailable
or inconsistent are rejected. The exactness theorem is conditional on
the resulting axis assignment.

\paragraph{Operator rules.}
Each rule specifies its layout requirements, permutation constraints and
parameter bindings. Rules may introduce internal axes and fixed axes.
External input and output coordinate axes are fixed; embedding rules also
fix the vocabulary axis. Parameter-sharing rules impose consistent
permutations at every supported use of a shared tensor.
Attention rules separately declare supported head permutations.

Table~\ref{tab:compiler-facts} summarizes the main constraint patterns.
Operator contracts, state bindings and their principal restrictions
are given in Appendix~\ref{app:operators}.

\begin{table}[t]
\centering
\small
\begin{tabular}{@{}p{0.35\linewidth}p{0.60\linewidth}@{}}
\toprule
Operation & Constraints \\
\midrule
Linear layer without parameter sharing
& No input--output glue. \\

Elementwise operation; addition or multiplication with aligned
coordinate axes
& Glue the participating coordinate axes. \\

Concatenation along the coordinate axis
& Glue each input to its output interval and split at the interval
  boundaries. \\

Concatenation along another axis
& Glue each input coordinate axis to the output coordinate axis;
  no coordinate splits. \\

Depthwise convolution with
$g=n_{\rm in}=n_{\rm out}>1$
& Glue input and output coordinate axes. \\

Other grouped convolution, $g>1$
& Split input and output coordinate axes at group boundaries;
  no input--output glue. \\

Supported attention with $h$ heads of dimension $d$
& Introduce $q,k,v$ axes of width $hd$; glue $q$ to $k$ and split
  all three at head boundaries. Declare head permutations separately. \\
\bottomrule
\end{tabular}
\caption{Representative operator rules. Each row assumes a supported
layout. Fixed-axis and parameter-sharing constraints may further restrict
the permitted permutations.}
\label{tab:compiler-facts}
\end{table}

\subsection{Algorithms}
\label{app:compiler-algorithms}

Algorithm~\ref{alg:compile} specifies the compiler pipeline.
Operator-specific steps follow the rules above and reject unsupported
cases. Algorithm~\ref{alg:interval-closure} specifies interface inference
from the emitted constraints.

\begin{algorithm}[t]
\caption{Compile the representation}
\label{alg:compile}
\begin{algorithmic}[1]
\REQUIRE model $f$, example inputs $x$, rule set $V$
\STATE trace $f$ in evaluation mode and propagate shapes and dtypes
       using fake inputs
\STATE retain the computational graph $\mathcal T$ and typed
       operation occurrences $\mathcal O$
\STATE validate parameter sharing and record supported aliases $\mathcal A$
\STATE infer coordinate-axis positions $a$ using the operator and layout rules
\STATE $(\mathcal C,\mathcal D)\leftarrow
       \mathrm{EmitRules}_V(\mathcal T,a,\mathcal A)$
\STATE \hspace{1em}\textit{$\mathcal C$: axes and constraints;
       $\mathcal D$: head-permutation declarations}
\STATE $(C,\mathcal I,R)\leftarrow
       \mathrm{RecoverInterfaces}(\mathcal C)$
       \quad \textit{(Algorithm~\ref{alg:interval-closure})}
\STATE verify $C$, $\mathcal I$ and derivation record $R$
       against $\mathcal C$
\STATE partition parameters and buffers into units $\mathcal U$
       and construct bindings $\mathcal B$, respecting $\mathcal A$
\STATE add the supported attention-head and output-projection
       occurrences to $\mathcal O$
\STATE $N_{\rm expected}\leftarrow$ entry count of distinct registered
       parameters and buffers, excluding BatchNorm training counters
\STATE $N_{\rm packed}\leftarrow$ total entry count of the blocks in $\mathcal U$
\IF{$N_{\rm packed}\ne N_{\rm expected}$}
  \STATE reject: parameter and buffer coverage mismatch
\ENDIF
\STATE construct head permutations $\mathcal H$ from $\mathcal D$
       and the inferred representation
\STATE verify $\mathcal H$ against the declarations, occurrences,
       interfaces and bindings
\STATE return $(\mathcal I,\mathcal O,\mathcal U,\mathcal B,\mathcal H)$
       and $(\mathcal C,C,R)$
\end{algorithmic}
\end{algorithm}

For a glue $(t,u;\,t',u';\,k)$, a split position $c\in[u,u+k]$ on $t$
induces the split position $c+u'-u$ on $t'$, and conversely.
Both directions include the endpoints.

In Algorithm~\ref{alg:interval-closure},
$\mathrm{Insert}(t,c,r)$ inserts a previously absent split position into
$C[t]$, adds $(t,c)$ to the worklist, and appends its justification $r$
to $R$. It has no effect if $c\in C[t]$. The justification records either
an initial split or an earlier split and the glue used to propagate it.

\begin{algorithm}[t]
\caption{Recover interfaces from interval constraints}
\label{alg:interval-closure}
\begin{algorithmic}[1]
\REQUIRE $\mathcal C=(\mathcal X,n,S,L,F)$
\STATE validate axis identifiers, positive integer widths,
       integer split positions and glue intervals
\STATE $C[t]\leftarrow\emptyset$ and $\mathrm{links}[t]\leftarrow[]$
       for every $t\in\mathcal X$
\STATE worklist $W\leftarrow\emptyset$; derivation record $R\leftarrow[]$
\FOR{each $t\in\mathcal X$}
  \STATE $\mathrm{Insert}(t,0,\mathrm{initial})$;
         $\mathrm{Insert}(t,n_t,\mathrm{initial})$
\ENDFOR
\FOR{each $(t,c)\in S$}
  \STATE $\mathrm{Insert}(t,c,\mathrm{initial})$
\ENDFOR
\FOR{each glue $\ell=(t,u;\,t',u';\,k)\in L$}
  \STATE insert its four endpoints as initial split positions
  \STATE append $(u,u+k,t',u'-u,\ell)$ to $\mathrm{links}[t]$
  \STATE append $(u',u'+k,t,u-u',\ell)$ to $\mathrm{links}[t']$
\ENDFOR
\WHILE{$W\ne\emptyset$}
  \STATE pop $(t,c)$ from $W$
  \FOR{each $(l,h,s,\delta,\ell)\in\mathrm{links}[t]$}
    \IF{$l\le c\le h$}
      \STATE $\mathrm{Insert}(s,c+\delta,(t,c,\ell))$
    \ENDIF
  \ENDFOR
\ENDWHILE
\STATE sort each $C[t]$ and form atoms $(t,c_i,c_{i+1})$
       between consecutive split positions
\STATE initialize one equivalence class per atom
\FOR{each glue $(t,u;\,t',u';\,k)\in L$}
  \FOR{each atom $(t,a,b)$ contained in $[u,u+k)$}
    \STATE merge its class with that of $(t',a+u'-u,b+u'-u)$
  \ENDFOR
\ENDFOR
\STATE let $\mathcal I$ be the resulting classes;
       assign each its common atom width
\STATE mark a class fixed if any of its atoms belongs to an axis in $F$
\STATE return $(C,\mathcal I,R)$
\end{algorithmic}
\end{algorithm}

\subsection{Bindings and head permutations}
\label{app:compiler-bindings}

For a linear layer, the input and output partitions divide the columns
and rows of its weight matrix into blocks. A block associated with
interfaces $I$ and $J$ transforms as
\[
  W_{JI}\mapsto P_J W_{JI}P_I^\top.
\]
Each bias segment is stored once and transforms by $P_J$.
Convolutional blocks retain the spatial kernel dimensions. Grouped
convolution and depthwise convolution use the corresponding block and
diagonal parameterizations. Normalization layers include their affine
parameters and, for BatchNorm, running means and variances. BatchNorm's
training counter is excluded.

\paragraph{Parameter sharing.}
The supported weight-tying rule covers an embedding layer and a linear
output layer without bias that share the same weight parameter and have
matching dimensions. The embedding uses fixed integer indices, with no
padding index or \texttt{max\_norm} update. The rule identifies the output
layer's input interface with the embedding's feature interface and its
output interface with the fixed vocabulary interface. These interfaces
must remain unsplit under the current binding rule. The shared parameter
is stored once, while both operations remain in the graph.
Unsupported parameter-sharing patterns are rejected.

\paragraph{Attention.}
For each supported attention layer, the constraints yield one query--key
interface and one value interface per head. A head permutation acts jointly
on the query, key and value projection blocks, their biases when present,
and the corresponding columns of the output projection in the
$[\mathrm{out},\mathrm{in}]$ convention. The output-projection bias is
unchanged. The checker verifies the head interfaces, operation occurrences
and all affected parameter bindings.

Supported head permutations exchange compatible interface factors and
therefore normalize $\Gam$. Together with the within-interface permutations,
they give $G=\Gam\rtimes E$ as defined in Section~\ref{sec:asserts}.
For $h$ heads of dimension $d$, the internal attention group is
$(S_d\times S_d)^h\rtimes S_h$.

\subsection{Guarantees}
\label{app:compiler-guarantees}

\begin{theorem}[Termination and exactness of interval closure]
\label{thm:compiler}
For every valid finite constraint system $\mathcal C$,
Algorithm~\ref{alg:interval-closure} inserts at most
$\sum_{t\in\mathcal X}(n_t+1)$ distinct split positions, including the
initial positions. It terminates with the least set containing the initial
positions and closed under propagation through every glue. This set and
the resulting interface partition are independent of processing order.

The permutation assignments satisfying the original constraints are
exactly those obtained by choosing independently one permutation in
$S_{n_I}$ for each non-fixed interface $I$, assigning the identity to
each fixed interface, and applying the chosen permutation to every atom
in the class. Their group is therefore isomorphic to
\[
  \Gam=\prod_{I\in\mathcal I^*}S_{n_I},
\]
where $\mathcal I^*$ denotes the non-fixed interfaces.
\end{theorem}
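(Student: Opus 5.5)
We prove the two halves of Theorem~\ref{thm:compiler} separately: first the closure properties of the split sets, then the characterization of the satisfying permutation assignments.

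\emph{Termination and the least fixed point.} Every inserted position $c$ on axis $t$ satisfies $0\le c\le n_t$. This holds for the initial positions by validation. It is preserved by propagation, because a link only fires for $c\in[u,u+k]$, and then $c+u'-u\in[u',u'+k]\subseteq[0,n_{t'}]$. Since $\mathrm{Insert}$ ignores positions already present and each insertion enqueues one worklist item, there are at most $\sum_t(n_t+1)$ insertions. Each pop scans finitely many links, so the algorithm terminates. Let $C^\ast$ be the least family of sets containing the initial positions and closed under the propagation rule. It exists as the intersection of all closed families, and the full family $\{0,\dots,n_t\}$ is one of them. Two invariants give $C=C^\ast$. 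First, every inserted position lies in $C^\ast$, by induction along the justification record $R$. Second, at termination $C$ is closed: every inserted $(t,c)$ was eventually popped, and at that moment all links of $t$ containing $c$ were applied. Since $C^\ast$ is defined without reference to order, $C$ is order-independent. The atoms are then order-independent too, and so are the classes, which form the equivalence relation generated by a fixed set of atom pairs. The merge step is well defined: if $a,b\in C[t]\cap[u,u+k]$ are consecutive, then closure places $a+\delta$ and $b+\delta$ in $C[t']$, and by the symmetric argument nothing lies strictly between them. So the atom $(t,a,b)$ maps exactly onto an atom of $t'$ of the same width, and each class has a common width.

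\emph{Exactness.} Let $\mathcal P$ be the set of assignments satisfying $\mathcal C$. We show that $\mathcal P$ equals the set $\mathcal Q$ of assignments built as in the statement. For this, identify each atom of a class $I$ with $\{0,\dots,n_I-1\}$ by relative coordinates, and let $\pi_t$ act on each atom by the permutation $\sigma_I$ of its class.

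\emph{Inclusion $\mathcal Q\subseteq\mathcal P$.} Each original split position lies in $C[t]$, so every atom lies on one side of it, and an atom-preserving permutation respects the split. A glue block $[u,u+k)$ is a union of atoms, with endpoints in $C$. Its image atoms in $t'$ lie in the same classes, because of the merge step, and in the same order. Hence both blocks are acted on by the same concatenated permutation $\sigma$. Fixed axes have all their classes fixed, so they receive the identity.

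\emph{Inclusion $\mathcal P\subseteq\mathcal Q$, the main obstacle.} Here we must show that any satisfying assignment preserves every position of $C^\ast$, not just the original ones. We argue by induction along $R$. Suppose $\pi_t$ preserves the position $c\in[u,u+k]$, and the glue gives $\pi_t(u+j)=u+\sigma(j)$. Preservation of $c$ means that $\sigma$ preserves the relative position $c-u$. The same $\sigma$ governs $\pi_{t'}$ on $[u',u'+k)$, and $\pi_{t'}$ already preserves $u'$ and $u'+k$. Therefore $\pi_{t'}$ preserves $c+u'-u$.

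It follows that every $\pi_t$ preserves each atom. Within a glue, the two atoms matched by the merge step carry the same relative permutation, namely the restriction of $\sigma$. Since equality of relative permutations is transitive, it extends along the generated equivalence classes, so a single $\sigma_I$ acts on every atom of class $I$. On a fixed axis $\pi_t=\mathrm{id}$, which forces $\sigma_I=\mathrm{id}$ for every class meeting that axis. Finally, the map $\mathcal Q\to\prod_{I\in\mathcal I^\ast}S_{n_I}$ is bijective and compatible with composition, which gives the stated isomorphism with $\Gam$.
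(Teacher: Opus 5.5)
Your proposal is correct and follows essentially the same route as the paper's proof. Both bound the insertions and identify the output with the least closed set via the justification record. Both then show by induction that admissible assignments preserve every propagated split, and hence every atom, so glues force one permutation per class. The converse, that class-wise assignments satisfy all splits, glues and fixed-axis constraints, is also handled the same way. You additionally spell out steps the paper leaves implicit: the explicit induction step through a glue, and the check that the merge step maps each atom exactly onto an equal-width atom.
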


\begin{proof}
Each inserted position is an integer within its axis and is inserted at
most once. Every insertion is an initial position or follows by propagation
from an earlier insertion. It therefore belongs to every closed set
containing the initial positions. At termination, every applicable
propagation has been processed, so the result is the least such closed set.

Propagated splits are consequences of the original constraints.
A glue preserves its interval endpoints and equates the relative
permutations within the two intervals. Preserving a split in one interval
therefore implies preservation of the corresponding split in the other.
By induction, every admissible permutation assignment preserves all final
atoms. Closure matches atoms of equal width, and glues require their
permutations to agree. Thus each equivalence class carries one permutation,
which must be the identity if the class contains an atom of a fixed axis.

Conversely, assigning one permutation to each class preserves every split
and satisfies every glue. Fixed classes satisfy the fixed-axis constraints.
The resulting correspondence is an injective group homomorphism onto
the admissible permutation assignments.
\end{proof}

\paragraph{Function preservation.}
Assume that the operator and layout rules correctly describe the traced
computation, and that the parameter bindings and sharing constraints
implement their permitted transformations. Each operation is then
equivariant under the corresponding input, output and parameter
permutations. Composition, with external inputs and outputs fixed, gives
\[
  f_{g\theta}(x)=f_\theta(x),\qquad g\in\Gam,
\]
for every admissible input and model state $\theta$, including represented
buffers. The supported head permutations also preserve the function, so
the equality extends to $G$ by composition. The operator-specific arguments
for Theorem~\ref{thm:guarantee} are given in Appendix~\ref{app:soundness}.
These equalities are statements in real arithmetic; numerical tests allow
floating-point error.

Exactness is relative to $\mathcal C$. It does not imply that the compiler
recovers every function-preserving transformation or that distinct
permutations produce distinct parameter values. Continuous symmetries and
additional symmetries at particular parameter values are outside this claim.

\subsection{Verification, complexity and scope}
\label{app:compiler-checks}

\paragraph{Verification.}
The factor checker verifies each split derivation, inclusion of all initial
positions and closure under every glue. It independently computes the
connected components of the atom-matching graph and checks the reported
interface members, widths and fixed flags. It uses graph connectivity
rather than the solver's union--find implementation. The check establishes
correctness relative to the supplied constraints; correctness of the
operator rules requires the separate soundness arguments.

The head-permutation checker verifies the declared permutations against
the supported attention rules and parameter bindings. The parameter and
buffer coverage check compares entry counts. Equal counts alone do not
establish entrywise coverage; that property also depends on the packing
and parameter-sharing rules.

Separate numerical tests compare outputs before and after applying
interface permutations, head permutations and their compositions.
These tests are not part of Algorithm~\ref{alg:compile}.

\paragraph{Complexity.}
Let $B$ be the number of final split positions, $K$ the number of atoms,
$Q$ the number of matched atom pairs counted across glues, and $T$ the
number of split--link checks in the worklist loop. With standard disjoint-set
operations and expected constant-time dictionary access, the interval
solver takes
\[
  O(B+K+T+Q\alpha(K))
\]
apart from sorting, where $\alpha$ is the inverse Ackermann function.
Sorting takes at most $O(B\log B+K\log K)$. The factor checker takes
$O(B+K+Q)$ apart from the same sorting bound. These bounds exclude tracing,
parameter packing and verification of head permutations. The bound on
insertions does not bound the number of split--link checks.

\paragraph{Scope.}
The supported operations include linear layers, one- and two-dimensional
convolutions and batch normalization, layer normalization over one feature
dimension, selected elementwise operations and pooling, concatenation,
aligned addition and multiplication, embeddings and learned tensors.
Depthwise convolution with a channel multiplier greater than one uses
the grouped-convolution rule. Reshapes, transposes and reductions require
a supported coordinate-axis mapping.

Standard PyTorch self-attention is supported in the specified unmasked
configuration. Native GPT-2 has a separate implementation of the rule for
standard causal self-attention. Arbitrary attention masks and
cross-attention are unsupported.

A positional encoding supplied as an external input is fixed. Addition
therefore fixes the residual interfaces connected to its coordinate axis.
Supported learned positional embeddings are instead permuted jointly with
the residual features. The compiler does not inspect fixed tensor values
to recover additional permutations that preserve those values. Registered
positional buffers and other tensor attributes require a supported
transformation rule.

Models whose traces satisfy these rules require no additional
architecture-specific rules. Further details on supported operations and rejection cases are
given in Appendix~\ref{app:operators}.

\section{Encoder}

\subsection{The encoder in equations}
\label{app:encoder}

The encoder alternates between cell features of width $\dE$ and
interface-coordinate features of width $\dH$. Each layer applies
interface exchange, delivery, module updates and emission.
Local exchange follows module incidences; global attention uses
one summary per interface. Delivery broadcasts coordinate features
to cells, while emission pools cells back to coordinates.
Projections between $\dE$ and $\dH$ therefore act on coordinate
vectors rather than the full cell grid.

We first describe these updates, treating structural codes as
given. Appendix~\ref{app:encoder-codes} defines the codes and
numerical lift. The bindings determine which coordinates
communicate; the codes condition their features and messages.

\paragraph{Notation.}
We write $\norm$ for LayerNorm \citep{ba2016layer} without
affine parameters and $\mathrm{LN}$ for LayerNorm with learned
scale and shift. Both normalize only the feature dimension,
with $\epsilon=10^{-5}$; $\gelu$ denotes GELU
\citep{hendrycks2016gelu}.
Learned maps are affine unless specified otherwise, with biases
suppressed in the notation. Update maps are shared across objects
and architectures, with separate parameters in each layer.
Lift maps and structural embeddings are shared across layers.

\paragraph{Initialization.}
Let $w_{abs}$, $s=1,\ldots,k$, be the scalars folded into cell
$(a,b)$ of a rectangular unit $u$, with scalar codes
$c_s\in\R^{\dE}$. Its initial state is
\begin{equation}
\begin{aligned}
  \E_u^0[a,b]
  &=
  \frac1k\sum_{s=1}^{k}\phi(w_{abs},c_s)
  +M\,\norm(\tau_o)\\
  &\quad+\lambda_{u,0}(a)+\lambda_{u,1}(b),
\end{aligned}
\label{eq:fold}
\end{equation}
where $\phi$ is the scalar lift, $\tau_o\in\R^{\dH}$ is the
owning occurrence's code, and $M:\R^{\dH}\to\R^{\dE}$ is
bias-free. The terms $\lambda_{u,q}$ label coordinates on fixed
interfaces and vanish on permutable interfaces.
A diagonal unit uses one coordinate index and one label term.
The lift, codes and labels are defined in
Appendix~\ref{app:encoder-codes}.
The raw unit layout retains every parameter entry; its
fixed-width encoding need not be injective.

Interface states start at zero, or at
$\norm(C\,\mathrm{obs}_I)$ when coordinate-aligned observations
are supplied (Appendix~\ref{app:providers}).
Weightless states start at $u_o^0=\tau_o$.
When used, the $g$ global tokens
$\G\in\R^{g\times\dH}$ have learned initial values.
One emission using the first layer's maps seeds the interface
states before the layers begin.

\paragraph{Interface pass: local exchange.}
Each interface contributes an invariant summary
\[
  s_I=\mean_r\gelu\big(P\,\norm(\Hs_I[r])\big).
\]
Let $\mathcal P(o)$ be the binding of occurrence $o$, with
$I(p)$ denoting the interface of binding $p$.
Every ordered pair $p,q\in\mathcal P(o)$, including $p=q$,
contributes a message
\begin{equation}
  m^{(o)}_{p\to q}
  =L\,\gelu\!\left(
    \norm\!\left(s_{I(p)}+\kappa_{o,pq}\right)
  \right),
  \label{eq:local}
\end{equation}
where $\kappa_{o,pq}\in\R^{\dH}$ encodes the operation type
and sending and receiving roles.
This is message passing over module incidences
\citep{gilmer2017neural}.

Define the incoming binding pairs of interface $J$ by
\[
  \mathcal A_J=
  \{(o,p,q):o\in\Ops,\ p,q\in\mathcal P(o),\ I(q)=J\}.
\]
Their mean gives the local increment:
\begin{equation}
  \Delta_J^{\mathrm{loc}}
  =
  \frac{1}{\max(1,|\mathcal A_J|)}
  \sum_{(o,p,q)\in\mathcal A_J}m^{(o)}_{p\to q}.
  \label{eq:local-average}
\end{equation}
Binding remain distinct when they bind to the same interface,
so aggregation retains incidence multiplicities.
The increment is zero when no messages arrive.
For attention, local exchange connects the query--key and
value interfaces even though no parameter unit directly joins them.

\paragraph{Interface pass: global exchange.}
Global attention \citep{vaswani2017attention} acts on
$X=[s_1;\ldots;s_A;\G]$:
\begin{align}
  \Delta^{\mathrm{att}}
    &=\operatorname{MHA}(\norm X),
    \nonumber\\
  \Delta^{\mathrm{ff}}
    &=\operatorname{FFN}_{H}(X+\Delta^{\mathrm{att}}).
    \label{eq:global}
\end{align}
MHA includes learned query, key, value and output projections,
with no attention dropout or additional positional encoding.
The feed-forward block applies affine LayerNorm, a map
$\dH\to2\dH$, GELU and a map $2\dH\to\dH$.

Local and global increments update the interface coordinates
and global tokens:
\begin{align}
  \Hs_J[r]&\leftarrow
  \Hs_J[r]+\norm\!\left(
    \Delta_J^{\mathrm{loc}}
    +\Delta_J^{\mathrm{att}}
    +\Delta_J^{\mathrm{ff}}
  \right),
  \label{eq:broadcast}\\
  \G&\leftarrow
  \G+\Delta_{\G}^{\mathrm{att}}+\Delta_{\G}^{\mathrm{ff}}.
  \nonumber
\end{align}
Subscripts $J$ and $\G$ select the corresponding rows.
Each interface receives the same increment at every coordinate;
global tokens receive only the global increments.

\paragraph{Delivery: interfaces to modules.}
For unit axis $q$, bound to interface $I_q$ by slot map
$\sigma_q$ under role $\rho_q$, define
\[
  x^{(q)}
  =\Pi_{\downarrow}\,
   \gelu\!\left(
     \norm\!\left(
       \Hs_{I_q}[\sigma_q]+e_{\rho_q}^{\mathrm{recv}}
     \right)
   \right),
\]
where the receiving-role embedding is broadcast over gathered
coordinates and
$\Pi_{\downarrow}:\R^{\dH}\to\R^{\dE}$ is bias-free.
A rectangular unit receives
\begin{equation}
  \E_u[a,b]\leftarrow
  \E_u[a,b]+x^{(0)}[a]+x^{(1)}[b],
  \label{eq:deliver}
\end{equation}
and a diagonal unit receives its single-axis contribution.
Projection to cell width precedes broadcasting over the cell grid.

For a weightless binding $p$, let
\[
  h_p=\mean_r\Hs_{I(p)}[\sigma_p(r)].
\]
The weightless occurrence receives
\begin{equation}
  u_o\leftarrow u_o+
  \norm\!\left(
    \mean_{p\in\mathcal P(o)}
    \gelu\!\left(
      \norm\!\left(h_p+\widetilde e_p^{\mathrm{recv}}\right)
    \right)
  \right),
  \label{eq:deliver-weightless}
\end{equation}
where $\widetilde e_p^{\mathrm{recv}}$ additionally encodes
the binding position when ordered.

\paragraph{Module pass.}
A rectangular cell combines its current feature with summaries
of cells sharing either coordinate:
\begin{align}
  r_a&=A\,\norm\!\left(\mean_b\E_u[a,b]\right),
  \nonumber\\
  c_b&=A\,\norm\!\left(\mean_a\E_u[a,b]\right),
  \label{eq:cellcontext}\\
  \E_u[a,b]&\leftarrow
  \E_u[a,b]+
  W_{\mathrm{out}}\,
  \gelu\!\left(W_{\mathrm{in}}\E_u[a,b]+r_a+c_b\right).
  \label{eq:cellupdate}
\end{align}
The row and column summaries share the bias-free map
$A:\R^{\dE}\to\R^{\dE}$.
Both $W_{\mathrm{in}}$ and $W_{\mathrm{out}}$ have input
and output width $\dE$.
For a diagonal unit, the context is
$A\,\norm(\mean_a\E_u[a])$, broadcast to every coordinate.
The cell-normalized variant replaces the input to
$W_{\mathrm{in}}$ by $\norm(\E_u[a,b])$.

Weightless states use
\[
  u_o\leftarrow u_o+\operatorname{FFN}_{o}(u_o),
\]
where $\operatorname{FFN}_{o}$ applies affine LayerNorm,
an affine map, GELU and a second affine map, all at width $\dH$.

\paragraph{Emission: modules to interfaces.}
For rectangular unit axis $q$, let $v_q$ be the mean of
$\E_u$ over the other coordinate axis; for a diagonal unit,
let $v_q=\E_u$.
The outgoing messages are
\[
  e_{u,q}[a]
  =
  \gelu\!\left(
    \norm\!\left(
      \Pi_{\uparrow}v_q[a]+e_{\rho_q}^{\mathrm{send}}
    \right)
  \right),
\]
with shared bias-free
$\Pi_{\uparrow}:\R^{\dE}\to\R^{\dH}$.
A weightless binding contributes
$\gelu(\norm(u_o+\widetilde e_p^{\mathrm{send}}))$
at each bound coordinate.

Messages are scattered through their slot maps and averaged
at each receiving coordinate, giving $\bar e_I[r]$.
The update is
\begin{equation}
  \Hs_I[r]\leftarrow
  \Hs_I[r]+\norm\big(\bar e_I[r]+\bar t_I[r]\big),
  \label{eq:emit}
\end{equation}
where $\bar t_I[r]$ is the optional transbinding contribution
defined below. An empty incoming message set has mean zero.

\paragraph{Linear transport.}
Transport combines interface-coordinate states using raw
parameters as coefficients.
For a rectangular unit, define
$A_u[a,b]=k^{-1}\sum_s w_{abs}$.
If its axes bind to $I,J$ through $\sigma_0,\sigma_1$,
it sends
\begin{align}
  t_{u\to I}&=A_u\Hs_J[\sigma_1],
  \nonumber\\
  t_{u\to J}&=A_u^\top\Hs_I[\sigma_0].
  \label{eq:transport-linear}
\end{align}
A diagonal unit sends
$a_u[a]\Hs_I[\sigma(a)]$, where
$a_u[a]=k^{-1}\sum_s w_{as}$.
Messages are scattered through their receiving slot maps.
Their mean gives $\bar t_I[r]$, using a separate count
from ordinary emission.

All transport messages read the same pre-emission interface
states. Transport introduces no learned parameters;
when disabled, $\bar t_I[r]=0$.

\paragraph{Readouts.}
The loss-profile experiments give each interface equal weight:
\begin{align*}
  h&=\mean_I\mean_r\mathrm{LN}_{H}(\Hs_I[r]),\\
  \widehat y&=\operatorname{MLP}_{H}(h).
\end{align*}
The MLP has widths $\dH\to2\dH\to60$, with GELU between
its affine layers. This configuration has no global tokens.

Standard benchmarks and scaling experiments additionally
pool global tokens:
\begin{align*}
  z&=\mean_j\mathrm{LN}_{G}(\G[j]),\\
  \widehat y&=\operatorname{MLP}_{HG}([h;z]).
\end{align*}
The streams have separate affine normalizations.
The MLP has widths $2\dH\to2\dH\to C$, with GELU,
where $C=10$ for INR classification and $C=1$ for CNN
accuracy prediction and scaling measurements.

For parameter-valued outputs, the scalar with code $c_s$
in a cell is predicted by
\begin{equation}
  \operatorname{MLP}_{E}\!\left(
    \mathrm{LN}_{E}(\E_u[a,b])\odot g'(c_s)+b'(c_s)
  \right),
  \label{eq:perparam}
\end{equation}
where $g',b':\R^{\dE}\to\R^{\dE}$ are affine and
$\operatorname{MLP}_{E}$ has widths $\dE\to2\dE\to1$
with GELU. A cell containing $k$ scalars is queried
once per scalar code.
Inverting the unit reshapes and transpositions returns
predictions in the stored parameter-block layouts.

\paragraph{Batching and execution.}
Different architectures are batched as a disjoint union,
with exchange, global tokens and readout pooling separate
for each network. Matching layouts can instead be stacked
along a batch dimension.
Within a network, compatible units are stacked for cell
computations; large units are processed in tiles with
activation checkpointing \citep{chen2016training}.
These modes preserve the corresponding separate computations
up to floating-point error.

\subsection{Lift and structural codes}
\label{app:encoder-codes}

The lift combines scalar values with their parameter kinds
and positions within a cell. For a convolutional cell, these
positions are kernel offsets.
Code-dependent scaling associates values with their positions
before averaging, following FiLM \citep{perez2018film}.
Adding codes before a linear average would lose this association.
Mean aggregation accommodates different numbers of folded entries
\citep{zaheer2017}.

\paragraph{Parameter and operation codes.}
The scalar and occurrence codes are
\begin{align}
  c_s
  &=E_{\mathrm{param}}(\mathrm{key}_s)
    +G_{\mathrm{geo}}\gamma(\delta_s),
    \nonumber\\
  \tau_o
  &=E_{\mathrm{type}}(\mathrm{kind}(o))
    +D\,d_o
    +\Pi\,\widetilde\psi(\mathrm{depth}(o)).
    \label{eq:codes}
\end{align}
The parameter key identifies a kind, such as a linear weight
or normalization scale, rather than a source-model name.
The descriptor $d_o$ records operation attributes, and
$\mathrm{depth}(o)$ is the longest-path depth of the occurrence.
The maps $D$ and $\Pi$ are bias-free.

Convolutional offsets $\delta_s$ are measured from the kernel
centre. Folded learned tensors use centred indices in their
flattened layout, embedded as $(j,0)$; entries without an
offset use $\delta_s=0$.
The 32-dimensional map $\gamma$ contains sines and cosines of
\[
  \frac{\pi}{8}2^j\langle v,\delta\rangle,
  \qquad
  j=0,\ldots,3,\quad
  v\in\{(1,0),(0,1),(1,1),(1,-1)\}.
\]

\paragraph{Scalar features and lift.}
Let $a,b$ index the two coordinate axes of a rectangular unit
$u$, and let $w_{abs}$, $s=1,\ldots,k$, be the scalars folded
into cell $(a,b)$.
Each scalar has a code $c_s\in\R^{\dE}$ identifying its
parameter kind and position within the cell
(Equation~\eqref{eq:codes}).

We apply fixed multi-frequency Fourier features
\citep{tancik2020fourier} without rescaling the weights:
\[
  \psi(w)=
  [\,\sin(2^p w),\,\cos(2^p w)\,]_{p=-5}^{8}.
\]
The initial cell state is
\begin{equation}
\begin{aligned}
  \E_u^0[a,b]
  &=
  \frac1k\sum_{s=1}^{k}
  \left(
    V\psi(w_{abs})\odot\alpha(c_s)+\beta(c_s)
  \right)\\
  &\quad+M\,\norm(\tau_o)
  +\lambda_{u,0}(a)+\lambda_{u,1}(b).
\end{aligned}
\label{eq:fold}
\end{equation}
The affine maps $V$, $\alpha$ and $\beta$ have output width
$\dE$; $\alpha(c_s)$ and $\beta(c_s)$ provide the
code-dependent scale and shift.
The remaining terms add operation metadata and fixed-coordinate
labels, defined below.
For one scalar per cell, $k=1$. For a convolutional unit, each cell contains the $k=k_hk_w$
kernel coefficients for one input--output channel pair.

\paragraph{Binding roles and positions.}
Each role $\rho$ has separate sending and receiving embeddings
$e_\rho^{\mathrm{send}},e_\rho^{\mathrm{recv}}\in\R^{\dH}$.
The code used by local exchange is
\[
  \kappa_{o,pq}
  =
  E_{\mathrm{type}}(\mathrm{kind}(o))
  +e_{\rho(p)}^{\mathrm{send}}
  +e_{\rho(q)}^{\mathrm{recv}}.
\]
Thus local exchange uses the operation type and binding roles;
the full occurrence code $\tau_o$ initializes cell and
weightless states.

Unit-axis delivery and emission use the role embeddings directly.
For an ordered weightless binding $p$ at position $j_p$, they use
\[
  \widetilde e_p^{\mathrm{send/recv}}
  =
  e_{\rho(p)}^{\mathrm{send/recv}}
  +\Pi_p\widetilde\psi(j_p),
\]
where $\Pi_p$ is a shared bias-free map.
Unordered bindings use the role embedding alone.

\paragraph{Fixed-coordinate labels.}
For axis $q$ bound to interface $I_q$ by slot map $\sigma_q$,
\[
  \lambda_{u,q}(a)=
  \begin{cases}
    B_{\mathrm{fix}}\eta(\sigma_q(a)),
       & \text{if }I_q\text{ is fixed},\\
    0, & \text{otherwise},
  \end{cases}
\]
where $B_{\mathrm{fix}}$ is shared and bias-free.
The coordinate encoding is sinusoidal
\citep{vaswani2017attention}:
\[
  \eta(r)=
  [\,\sin(r\,10000^{-2j/32}),
     \cos(r\,10000^{-2j/32})\,]_{j=0}^{15}.
\]
Labels are added to cells on every fixed interface, including
input and output interfaces and residual interfaces fixed by
externally supplied positional encodings.
Learned positional embeddings transform through their parameter
bindings and do not by themselves fix the residual interface.
Coordinate labels are not added directly to interface states.

\section{Technical appendix}
\label{app:technical}

\subsection{Equivariance of the encoder}
\label{app:equivariance}

Every encoder stage respects the coordinate action of $\Gam$.
Shared pointwise maps and feature normalization commute with coordinate
permutations. Pooling removes the action on the reduced axis and preserves
it on the retained axes; gathering and scattering follow the bindings.
Linear transbinding contracts matching coordinate axes and therefore obeys
the same transformation law. Fixed-coordinate labels occur only where
the action is the identity. Optional observation features must transform
with their interface coordinates.

Supported module exchanges carry states between the corresponding
interfaces, modules and units. These exchanges preserve the codes,
layouts and typed bindings used by the encoder. Local aggregation
follows the exchanged incidences, while global attention follows the
permutation of interface summaries because it has no interface-position
encoding. Global-token outputs remain unchanged. Thus the encoder
respects the combined action $G=\Gam\rtimes E$.

The network-level readouts pool over coordinates and interfaces and
are therefore invariant. The per-parameter readout applies a shared
decoder to each cell and its scalar codes, so its outputs transform
with the represented parameters. These properties do not imply
that the encoder distinguishes all states in distinct $G$-orbits.

For Theorem~\ref{thm:guarantee}(b), the implemented exchanges move
complete attention heads: their query, key and value parameters,
including biases, move together with the corresponding output-projection
blocks. In evaluation mode, this merely reorders the sum of projected
head outputs and leaves the layer's output unchanged. The same argument
applies to the supported GPT-2 causal attention rule, whose mask is
shared across heads.
\subsection{Feature providers}
\label{app:providers}

A \emph{feature provider} supplies optional observations
$\mathrm{obs}_I\in\R^{n_I\times d_{\mathrm{ctx}}}$ on any subset of
interfaces, with one feature vector per coordinate. These initialize
the interface states before the seeding emission:
\begin{equation}
  \Hs_I^0=\norm(C\,\mathrm{obs}_I)
  \quad\text{where observations are supplied},\qquad
  \Hs_I^0=0\quad\text{otherwise},
  \label{eq:provider}
\end{equation}
where $C:\R^{d_{\mathrm{ctx}}}\to\R^{\dH}$ is shared across interfaces
and architectures. The remaining encoder is unchanged.

\paragraph{Learned probes.}
For the INR experiments, we use learned input probes
\citep{kofinas2024}. Evaluating each network at inputs
$z_1,\ldots,z_S$ gives
\begin{equation}
  \mathrm{obs}_I=
  [\,a_I(z_1),\ldots,a_I(z_S)\,]\in\R^{n_I\times S},
  \label{eq:probes}
\end{equation}
where $a_I(z)$ is the corresponding layer value, after its activation
where present. The probe inputs are shared across networks and
optimized jointly with the encoder, while the input networks'
weights remain fixed. Execution is differentiable with respect to
the probes, and the recorded values follow hidden-coordinate
permutations.

\subsection{Permutation actions in attention representations}
\label{app:attention-audit}

Consider an attention layer with two heads of width four and fixed
input and output coordinates. Independent query--key and value
permutations within each head, together with complete-head exchanges,
give the subgroup
\[
  G_{\mathrm{att}}
  =(\Sn{4}\times\Sn{4})^2\rtimes\Sn{2},
  \qquad |G_{\mathrm{att}}|=663{,}552.
\]
Its function preservation follows from Lemma~\ref{lem:attention}
and Theorem~\ref{thm:guarantee}(b).
Table~\ref{tab:orders} compares this action with those imposed by
the representations. Continuous symmetries and additional
parameter-specific symmetries are outside this comparison.

\begin{table}[t]
\centering
\small
\setlength{\tabcolsep}{5pt}
\begin{tabular}{@{}llrr@{}}
\toprule
Representation & Permutation action & Order & Non-symmetries \\
\midrule
Reference subgroup
  & $G_{\mathrm{att}}$ & 663,552 & 0 \\
QKV-triple graph \citep{kofinas2024}
  & $\Sn{8}$ & 40,320 & 39,168 \\
With fixed head labels \citep{lim2024}
  & $\Sn{4}\times\Sn{4}$ & 576 & 0 \\
Transformer-NFN \citep{tran2025tnfn}
  & $G_{\mathrm{att}}$ & 663,552 & 0 \\
UWE
  & $G_{\mathrm{att}}$ & 663,552 & 0 \\
\bottomrule
\end{tabular}
\caption{Internal permutation actions for two heads of width four,
with stream coordinates fixed. Non-symmetries are listed
transformations that do not preserve the attention function for
all parameter values. Transformer-NFN also supports continuous
basis changes, omitted from these counts.}
\label{tab:orders}
\end{table}

\paragraph{Representation-induced actions.}
The construction of \citet[Appendix~C.4]{kofinas2024} associates
each head coordinate with one node, combining its Q, K and V
weights into edge features. Without head labels, the eight nodes
can be relabelled arbitrarily, jointly permuting the rows of
$W_Q,W_K,W_V$ and the matching columns of $W_O$.
Here relabelling transports the numerical features; it need not
fix an individual numerically labelled graph.

The joint permutations that map each complete head block onto
a head block form $\Sn{4}\wr\Sn{2}$, of order
$24^2\cdot2=1{,}152$. The remaining $39{,}168$ mix coordinates
between heads and are not universal attention symmetries.
The joint QKV action also omits independent value permutations.
Adding fixed head labels, as proposed by \citet{lim2024},
restricts the action to $\Sn{4}\times\Sn{4}$ and additionally
excludes head exchange. UWE represents the two coordinate
factors separately and includes supported head exchanges,
giving $G_{\mathrm{att}}$.

\paragraph{Numerical check.}
We used stream width eight, independent standard normal projection
weights, zero biases and three standard normal input sequences
of six tokens, with random seed 20260922 and double-precision
evaluation. Among all $8!=40{,}320$ joint permutations, exactly
$1{,}152$ had maximum absolute output error below $10^{-9}$:
precisely those preserving the head partition.
All $4{,}000$ uniformly sampled elements of $G_{\mathrm{att}}$
also passed this threshold.

A separate encoder check used four rounds,
$\dE=8$, $\dH=32$, four attention heads, four global tokens
and linear transport, initialized with seed 20260923.
A complete-head exchange and a value-only permutation with
matching output-projection columns preserved its invariant
readout to within $10^{-16}$.
Exchanging one coordinate between heads, or permuting query
coordinates without their key coordinates, changed both the
attention output and this readout.
These are finite implementation checks; they do not establish
that the encoder distinguishes all distinct symmetry orbits.

\subsection{Representation size}
\label{app:counts}

Consider $L$ transformer encoder blocks of residual width $D$,
with $h$ heads of width $r=D/h$, feed-forward width
$D_{\mathrm{ff}}$, two LayerNorms per block and one final
LayerNorm. Embeddings and the task output layer are excluded.

\paragraph{Interfaces and coordinate states.}
Residual additions and normalization identify the residual
coordinate axes across the stack. Each head contributes one
query--key interface and one value interface, and each block
contributes one feed-forward hidden interface. Thus
\[
\begin{aligned}
  A &= 1+L(2h+1),\\
  S &= D+L(2hr+D_{\mathrm{ff}})
     =D+L(2D+D_{\mathrm{ff}}).
\end{aligned}
\]
These counts include fixed interfaces.
With biases and affine normalization parameters, the parameter
count is
\[
  P=L(4D^2+2DD_{\mathrm{ff}}+9D+D_{\mathrm{ff}})+2D.
\]
For $D=768$, $h=12$, $L=12$ and $D_{\mathrm{ff}}=3072$,
\[
  A=301,\qquad S=56{,}064,\qquad P=85{,}056{,}000.
\]
The cell count is close to $P$: matrix entries and biases are
stored individually, while normalization scale and shift share
cells.

\paragraph{Neuron graph.}
Applying the construction of
\citet[Appendices~C.3--C.4]{kofinas2024} gives $D$ input nodes.
Each block adds four sets of $D$ stream nodes, $hr=D$ head
nodes and $D_{\mathrm{ff}}$ feed-forward nodes.
The final LayerNorm adds another $D$ nodes, giving
\[
  N=(5L+2)D+LD_{\mathrm{ff}}=84{,}480.
\]
Full attention over these nodes has
$N^2=7{,}136{,}870{,}400$ ordered pairs.
Attention over UWE's interface summaries, with four global
tokens, has $(A+4)^2=305^2=93{,}025$.
These counts compare attention domains; a message-passing
graph network does not materialize all node pairs.

\subsection{Unsupported cases and validation}
\label{app:rejections}

Compilation rejects operations, layouts and state-sharing
patterns outside the rules of Appendix~\ref{app:operators}.
Examples include forward hooks, in-place operations and mutation
arguments, unsupported tensor attributes, undeclared aliases and
repeated stateful uses without a sharing rule.
The traced computation must return one tensor and admit a
consistent coordinate-axis assignment. Unsupported
data-dependent scalar extraction and coordinate-mixing layouts
are rejected.

Arbitrary attention masks and cross-attention remain unsupported;
the prescribed GPT-2 causal rule is a separate supported case.
There are no dedicated rules for rotary embeddings, recurrence
or mixture-of-experts routing. Operation-specific restrictions,
including those on embeddings, normalization and concatenation,
are listed in Appendix~\ref{app:operators}.

A final coverage check compares the number of packed state
entries with the expected number of parameters and buffers,
excluding BatchNorm's training counter. A mismatch is an error.
Equal counts alone do not establish one-to-one coverage; that
also depends on the packing and sharing rules.
The factor and head-exchange checks are described in
Appendix~\ref{app:compiler-checks}.

\subsection{Supported operations and state transformations}
\label{app:vocab}
\label{app:operators}
\label{app:rejections}

Table~\ref{tab:rules} specifies permutation constraints and
parameter and buffer bindings. Permutations act on resolved
coordinate axes; other indices are unchanged.
Interface partitions may divide a tensor into several units.
Rectangular units bind two coordinate axes, diagonal units bind
one, and remaining indices are folded within cells.
Shared state must receive the same transformation at every use.

\begin{table}[p]
\centering
\small
\setlength{\tabcolsep}{3.5pt}
\renewcommand{\arraystretch}{1.08}
\begin{tabular}{@{}p{0.22\linewidth}p{0.35\linewidth}p{0.37\linewidth}@{}}
\toprule
Operation & Permutation constraints & State bindings \\
\midrule

Linear layer; convolution with one group
& No input--output glue.
& Rectangular weights; biases bind the output.
  Kernel offsets are folded. \\

Depthwise convolution,
$g=n_{\mathrm{in}}=n_{\mathrm{out}}>1$
& Glue input and output.
& Diagonal kernels and biases; kernel offsets are folded. \\

Other grouped convolution, $g>1$
& Split input and output at group boundaries;
  no input--output glue.
& Rectangular kernels connect matching groups;
  biases bind the output. \\

BatchNorm; single-axis LayerNorm
& Glue input and output.
& Diagonal affine parameters.
  BatchNorm running statistics follow the channel action. \\

Pointwise activation, spatial pooling, evaluation-mode dropout
& Glue input and output.
& No stored state. \\

Aligned addition or multiplication
& Glue both operands and output.
& No stored state. \\

Coordinate-axis concatenation
& Glue each operand to its output interval;
  split at interval boundaries.
& No stored state. \\

Concatenation along another axis
& Glue all input coordinate axes to the output coordinate axis.
& No stored state. \\

Contiguous coordinate slice
& Glue the selected input interval to the output;
  split at its endpoints.
& No stored state. \\

Coordinate-preserving layout change or mean reduction
& Glue corresponding coordinate axes.
& No stored state. \\

Supported self-attention
& Split Q, K and V at head boundaries; glue Q to K.
  Declare complete-head exchanges separately.
& Rectangular projection blocks; Q and K share a head
  interface, while V shares its interface with the output
  projection's input. Biases bind projection outputs. \\

Embedding lookup
& Fix the vocabulary axis; constrain the feature axis
  through its uses.
& Rectangular vocabulary--feature units. \\

Learned floating-point tensor
& Constrain the feature axis through its uses.
& Diagonal units; remaining indices are folded. \\

Embedding tied to a linear output layer
& Glue the linear input to the embedding feature axis
  and its output to the fixed vocabulary axis.
& Store the shared weight once; retain both occurrences. \\

External tensor inputs and output
& Fix their coordinate axes.
& Input data are not packed into units. \\

\bottomrule
\end{tabular}
\caption{Implemented constraint patterns and state bindings.
Each row is subject to the restrictions below.
Fixed-axis and sharing constraints may further restrict the action.
Biases and shared state are stored without duplication.}
\label{tab:dictionary}
\label{tab:rules}
\end{table}

\paragraph{Binding roles.}
The vocabulary contains twelve roles: input (\texttt{in}),
output (\texttt{out}), normalization state (\texttt{gamma\_beta}),
depthwise channel (\texttt{channel}), shared query/key/value
projection input (\texttt{qkv\_in}), query, key and value projection
outputs (\texttt{q\_out}, \texttt{k\_out}, \texttt{v\_out}),
output-projection input and output (\texttt{o\_in},
\texttt{o\_out}), and main and skip operands
(\texttt{main}, \texttt{skip}) used in skip connections.

\paragraph{Operation families.}
Compilation uses evaluation-mode computation at the supplied
input shapes.
Convolutions, BatchNorm, max pooling, average pooling and adaptive
average pooling support one or two spatial dimensions.
LayerNorm normalizes one feature axis.
Activation rules cover ReLU, ReLU6, LeakyReLU, GELU, SiLU,
tanh, sigmoid, Hardswish, Hardsigmoid and functional
\texttt{hardtanh}; identity and evaluation-mode
\texttt{Dropout} and \texttt{Dropout2d} are also supported.
Depthwise convolution with channel multiplier greater than one
uses the grouped-convolution rule.

\paragraph{Layout restrictions.}
Reshape and flatten must preserve the coordinate axis without
merging it with another non-singleton dimension.
Expansion cannot change its width.
Selection uses literal indices and slices; the coordinate axis
must survive as a nonempty contiguous interval with unit stride.
Mean reductions require explicit non-coordinate axes.
Concatenation requires an explicit ordered operand list.
Addition and multiplication require two tensor operands with
aligned coordinate axes; scaled addition requires a nonzero
coefficient.
These are implementation restrictions, not maximal conditions
for equivariance.

\paragraph{Learned tensors and sharing.}
Embedding indices are fixed integer data; padding indices and
\texttt{max\_norm} updates are unsupported.
Learned tensor attributes must be floating-point parameters.
Other tensor attributes and buffers require an explicit
state-transformation rule.

The weight-tying rule covers one embedding and one bias-free
linear output layer with matching dimensions and the same
weight parameter. Their feature and vocabulary interfaces
must remain unsplit.
Additional aliases, repeated stateful uses and shared-storage
views are outside this rule.

\paragraph{Attention restrictions.}
The standard PyTorch rule requires batched self-attention:
Q, K and V receive the same tensor, and their projections use
a packed weight with equal embedding widths.
Masks, causal hints, added key/value bias tokens and
\texttt{add\_zero\_attn} are unsupported.

The native GPT-2 rule separately supports unpruned causal
self-attention with standard scaling and the prescribed
head-independent causal mask.
Cross-attention, padding masks, cached decoding, layer-dependent
attention scaling and custom attention implementations are
outside this rule.

\paragraph{Rejection and verification.}
The trace must return one tensor and admit a consistent
coordinate-axis assignment.
Forward hooks, in-place operations, mutation arguments,
unsupported data-dependent scalar extraction and undeclared
state sharing are rejected.
There are no dedicated rules for rotary embeddings, recurrence
or mixture-of-experts routing.

Constraint recovery is specified in
Appendix~\ref{app:compiler-constraints};
factor, head-exchange and state-coverage checks are described
in Appendix~\ref{app:compiler-checks}.

\subsection{Soundness of the operator rules}
\label{app:soundness}

For every supported operation, the emitted constraints and
state bindings imply
\begin{equation}
  f(P_{a_1}x_1,\ldots,P_{a_m}x_m;\theta')
  =P_b f(x_1,\ldots,x_m;\theta),
  \label{eq:contract}
\end{equation}
where $\theta'$ is the prescribed transformation of its
parameters and buffers.
The identity holds for every admissible input and state,
under the restrictions of Appendix~\ref{app:operators}.
The arguments below concern evaluation-mode computation in
real arithmetic. Batch, token and spatial indices are unchanged.

\paragraph{Linear layers and convolutions.}
For input and output permutations $P_a,P_b$,
\[
  (P_bWP_a^\top)(P_ax)+P_bb=P_b(Wx+b).
\]
Convolution satisfies this identity at each kernel offset;
summation over offsets preserves it.
For grouped convolution, the splits restrict permutations
to each group's input and output coordinates.
For depthwise convolution with channel multiplier one,
a common input--output permutation moves each channel's
kernel and bias together.

\paragraph{Coordinatewise operations and normalization.}
A shared scalar function or channelwise spatial pooling
commutes with coordinate permutations.
BatchNorm permutes affine parameters and any running
statistics with their channels.
Evaluation-mode dropout is the identity.

LayerNorm's mean and variance are permutation-invariant, so
\[
  \mathrm{LN}(Px;P\gamma,P\beta)
  =P\,\mathrm{LN}(x;\gamma,\beta).
\]
The identity also holds without affine parameters.

\paragraph{Addition, multiplication and concatenation.}
For aligned operands,
\[
  Px+\alpha Py=P(x+\alpha y),
  \qquad
  (Px)\odot(Py)=P(x\odot y).
\]
Coordinate-axis concatenation satisfies
\[
  \cat(P_1x_1,\ldots,P_mx_m)
  =(P_1\oplus\cdots\oplus P_m)\cat(x_1,\ldots,x_m).
\]
The splits preserve operand boundaries, and the glues equate
each operand's action with its output block.
Concatenation along another axis commutes with the common
coordinate permutation.

\paragraph{Slicing and layout operations.}
For a supported coordinate slice, let $R$ select its interval.
The endpoint splits preserve this interval, and the glue
equates its action with the output permutation. Hence
\[
  RP_{\mathrm{in}}=P_{\mathrm{out}}R.
\]
Selections on other axes leave the coordinate action unchanged.

Supported transposes, reshapes and flattenings relocate the
coordinate axis without mixing its coordinates.
Expansion and mean reduction affect only other axes.
These operations therefore satisfy Equation~\eqref{eq:contract}.
Fixed integer indices and shape computations have the identity
action; coordinate permutations do not change tensor shapes.

\paragraph{Embeddings, learned tensors and weight tying.}
Let $B\in\R^{V\times d}$ be an embedding matrix and $e_i$
the one-hot vector for a fixed vocabulary index.
A feature permutation $P$ gives $B'=BP^\top$, and
\[
  (B')^\top e_i=PB^\top e_i,
  \qquad
  B'(Ph)=Bh.
\]
Thus embedding lookup and the tied linear output layer induce
the same transformation of their shared weight.
A learned tensor source transforms along its bound feature
axis, supplying the required action to subsequent operations.

\begin{lemma}[Attention]
\label{lem:attention}
Consider a supported attention layer with row-wise features $X$,
projections
$Q_j=XW_{Q,j}^\top+\mathbf1b_{Q,j}^\top$,
and analogous $K_j,V_j$.
Its outputs are
\[
\begin{aligned}
  U_j
    &=\softmax\!\left(Q_jK_j^\top/\sqrt d+M\right)V_j,\\
  Y
    &=\cat(U_1,\ldots,U_h)W_O^\top+\mathbf1b_O^\top,
\end{aligned}
\]
where $d$ is the query--key width and $M$ is zero or the
prescribed GPT-2 causal mask.
Let $R_j,S_j$ independently permute head $j$'s query--key
and value coordinates.
Under $X'=XP_x^\top$ and the corresponding transformations
of projection weights and biases, with output permutation
$P_y$, the output satisfies $Y'=YP_y^\top$.
\end{lemma}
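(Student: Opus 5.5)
We first fix the transformations implied by the emitted constraints. The input stream uses $P_x$ and the output uses $P_y$. In the supported self-attention rule both are residual-stream axes, typically glued to the same interface, but the argument does not need $P_x=P_y$. Write $R=R_1\oplus\cdots\oplus R_h$ for the block-diagonal permutation of the query--key axis and $S=S_1\oplus\cdots\oplus S_h$ for that of the value axis. These permutations preserve head boundaries because of the splits, and query and key share $R$ because of the glue. The prescribed parameter action is
\[
W_{Q,j}'=R_jW_{Q,j}P_x^\top,\qquad b_{Q,j}'=R_jb_{Q,j},
\]
with the same form for $K$ (also using $R_j$) and for $V$ (using $S_j$). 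For the output projection it is $W_O'=P_yW_OS^\top$ and $b_O'=P_yb_O$; in the $[\mathrm{out},\mathrm{in}]$ convention the column block of head $j$ is multiplied by $S_j^\top$. We then verify the identity in four steps, using row-wise features throughout.

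\textbf{Step 1 (projections).} Since $X'=XP_x^\top$ and $P_x^\top P_x=I$,
\[
Q_j'=XP_x^\top P_xW_{Q,j}^\top R_j^\top+\mathbf1b_{Q,j}^\top R_j^\top=Q_jR_j^\top .
\]
In the same way $K_j'=K_jR_j^\top$ and $V_j'=V_jS_j^\top$.

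\textbf{Step 2 (logits).} We have
\[
Q_j'K_j'^\top=Q_jR_j^\top R_jK_j^\top=Q_jK_j^\top,
\]
because $R_j$ is orthogonal. The scaling $1/\sqrt d$ is unchanged. The mask $M$ is either zero or the GPT-2 causal mask, which depends only on token indices and is the same for every head. Token indices are untouched by the action, so the softmax argument is identical and $U_j'=U_jS_j^\top$.

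\textbf{Step 3 (concatenation and output projection).} Concatenating the heads gives $\cat(U_1',\ldots,U_h')=\cat(U_1,\ldots,U_h)\,S^\top$, which is the row-wise form of the concatenation identity already proved above. Then
\[
Y'=\cat(U)\,S^\top S\,W_O^\top P_y^\top+\mathbf1b_O^\top P_y^\top=YP_y^\top .
\]

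\textbf{Step 4 (packed storage).} We check that the packed in-projection weight of the standard PyTorch rule realises exactly these block transformations. Its rows are partitioned into the $Q$, $K$ and $V$ head blocks, and the units are bound accordingly.

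The algebra itself is routine. The step needing most care is matching the bindings to the formulas: confirming that the compiler's split and glue constraints force exactly the block-diagonal, shared-$R_j$ structure used above, and that the mask is truly invariant. The lemma is false without both. If $R$ mixed coordinates across heads, $Q_jK_j^\top$ would no longer be preserved head by head, as the numerical check in Table~\ref{tab:orders} shows. If the mask were head-dependent under a head exchange, the output would change; this matters for Theorem~\ref{thm:guarantee}(b), not for the within-head case treated here. We would discharge this by citing the attention row of Table~\ref{tab:rules} together with Theorem~\ref{thm:compiler}.
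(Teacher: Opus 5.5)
Your proof is correct and follows the paper's argument essentially step for step. The affine rule gives $Q_j'=Q_jR_j^\top$, $K_j'=K_jR_j^\top$ and $V_j'=V_jS_j^\top$; the shared $R_j$ cancels in the logits; the token-indexed mask is unchanged; and the block-diagonal $S$ cancels against $W_O'=P_yW_OS^\top$, $b_O'=P_yb_O$. Your Step 4 on packed storage and your closing discussion of why the splits and the query--key glue are needed are extra bookkeeping that the paper leaves to the binding rules rather than to this lemma.
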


\begin{proof}
The affine rule gives
\[
  Q'_j=Q_jR_j^\top,\qquad
  K'_j=K_jR_j^\top,\qquad
  V'_j=V_jS_j^\top.
\]
Thus $Q'_j(K'_j)^\top=Q_jK_j^\top$.
The mask acts on token indices and is unchanged, so
$U'_j=U_jS_j^\top$.
Writing $U=\cat(U_1,\ldots,U_h)$ and
$S=\operatorname{diag}(S_1,\ldots,S_h)$, the output bindings give
$W'_O=P_yW_OS^\top$ and $b'_O=P_yb_O$.
Therefore
\[
  Y'
  =US^\top(P_yW_OS^\top)^\top+\mathbf1b_O^\top P_y^\top
  =YP_y^\top.
\]
\end{proof}

\begin{proof}[Proof of Theorem~\ref{thm:guarantee}]
For part (a), let $\gamma\in\Gam$ induce permutations $P_t$
on tracked tensor axes and a consistent transformed state
$\theta'$.
By Theorem~\ref{thm:compiler}, these permutations satisfy
the emitted constraints.

Proceed in topological order through the trace.
Fixed inputs and index data have the identity action;
learned tensor sources follow their state bindings.
If the inputs of operation $f$ obey their assigned actions,
Equation~\eqref{eq:contract} gives
\[
  t'
  =f(P_{t_1}t_1,\ldots,P_{t_m}t_m;\theta_f')
  =P_t f(t_1,\ldots,t_m;\theta_f)
  =P_t t.
\]
Sharing rules ensure that every $\theta_f'$ is induced by
the same $\theta'$.
Induction reaches the fixed output axis, establishing
function preservation.

For part (b), partition $W_O$ into head blocks $O_j$.
The attention output is
\[
  Y=\sum_{j=1}^{h}U_jO_j^\top+\mathbf1b_O^\top.
\]
A supported head exchange jointly permutes the Q, K and V
projections, their biases and the blocks $O_j$.
It reorders the summands and preserves the common mask,
leaving $Y$ unchanged.
The exchange rules enforce compatible bindings and shared-state
actions.
Composition establishes function preservation for
$G=\Gam\rtimes E$.
\end{proof}

\end{document}